\documentclass{article}

\usepackage{PRIMEarxiv}

\usepackage[utf8]{inputenc} 
\usepackage[T1]{fontenc}    
\usepackage{hyperref}       
\usepackage{url}            
\usepackage{booktabs}       
\usepackage{amsfonts}       
\usepackage{nicefrac}       
\usepackage{microtype}      
\usepackage{lipsum}
\usepackage{fancyhdr}       
\usepackage{graphicx}       
\graphicspath{{media/}}     
\usepackage{amssymb}
\usepackage{latexsym}

\usepackage{graphicx}
\usepackage{mathtools}
\usepackage{amsfonts}
\usepackage{pifont}
\usepackage{booktabs}
\usepackage{todonotes}
\usepackage{siunitx}
\usepackage{microtype}
\usepackage{dirtytalk}
\usepackage{amsthm}
\usepackage{makecell} 
\usepackage{xcolor}
\usepackage{natbib}
\usepackage{algorithm}%
\usepackage{algorithmicx}%
\usepackage{algpseudocode}%

\DeclarePairedDelimiterX{\inner}[2]{\langle}{\rangle}{#1, #2}


\newcommand{\Z}{\mathbb{Z}}
\newcommand{\Q}{\mathbb{Q}}
\newcommand{\N}{\mathbb{N}}
\newcommand{\C}{\mathbb{C}}
\newcommand{\R}{\mathbb{R}}

\DeclareMathOperator{\sech}{sech}

\newcommand{\algorithmInput}[1]{\hspace*{\algorithmicindent} \textbf{Input} #1 \\}
\newcommand{\algorithmOutput}[1]{\hspace*{\algorithmicindent} \textbf{Output} #1}

\newtheorem{definition}{Definition}
\newtheorem{proposition}{Proposition}
\newtheorem{theorem}{Theorem}

\newtheorem{lemma}{Lemma}

\newtheorem{corollary}{Corollary}
\newtheorem{example}{Example}

\newcounter{ArasCounter}

\newcounter{GittaCounter}

\newcounter{HolgerCounter}

\newcounter{PhilippCounter}

\newcommand*\widebar[1]{\@ifnextchar^{{\wide@bar{#1}{0}}}{\wide@bar{#1}{1}}}

\pagestyle{fancy}
\thispagestyle{empty}
\rhead{ \textit{ }} 

\fancyhead[LO]{Symbolic Recovery of Differential Equations: The Identifiability Problem}

\title{Symbolic Recovery of Differential Equations: \\ The Identifiability Problem
}

\author{
  Philipp Scholl \\
  Ludwig-Maximilians-Universit\"at M\"unchen \\
  Munich \\
  Germany\\
  \texttt{scholl@math.lmu.de} \\
   \And
  Aras Bacho \\
  Ludwig-Maximilians-Universit\"at M\"unchen \\
  Munich \\
  Germany\\
  \texttt{bacho@math.lmu.de} \\
     \And
  Holger Boche \\
  Technische Universit\"at M\"unchen \\
  Munich Center for Quantum Science and Technology (MCQST) \\
  Munich Quantum Valley (MQV) \\
  Munich \\
  Germany\\
  \texttt{boche@tum.de} \\
     \And
  Gitta Kutyniok \\
  Ludwig-Maximilians-Universit\"at M\"unchen \\
  University of Troms\o{} \\
  DLR-German Aerospace Center \\
  Munich Center for Machine Learning (MCML) \\
  Munich \\
  Germany\\
  \texttt{kutyniok@math.lmu.de} \\
}

\begin{document}
\maketitle

\begin{abstract}
Symbolic recovery of differential equations is the ambitious attempt at automating the derivation of governing equations with the use of machine learning techniques. In contrast to classical methods which assume the structure of the equation to be known and focus on the estimation of specific parameters, these algorithms aim to learn the structure and the parameters simultaneously. While the uniqueness and, therefore, the identifiability of parameters of governing equations are a well-addressed problem in the field of parameter estimation, it has not been investigated for symbolic recovery. However, this problem should be even more present in this field since the algorithms aim to cover larger spaces of governing equations. In this paper, we investigate under which conditions a solution of a differential equation does not uniquely determine the equation itself. For various classes of differential equations, we provide both necessary and sufficient conditions for a function to uniquely determine the corresponding differential equation. We then use our results to devise numerical algorithms aiming to determine whether a function solves a differential equation uniquely. Finally, we provide extensive numerical experiments showing that our algorithms can indeed guarantee the uniqueness of the learned governing differential equation, without assuming any knowledge about the analytic form of function, thereby ensuring the reliability of the learned equation.
\end{abstract}

\keywords{physical law learning \and learning differential equations \and machine learning.}

\section{Introduction} \label{sec:introduction}
For most of human history, scientists had to derive physical laws by hand. As the availability of data in physics is growing and artificial intelligence is becoming more powerful, engineers and physicists are aiming to apply machine learning to infer not only specific parameters of the governing laws from experimental data \citep{alessandrini1986identification, acar1993identification, knowles2001parameter} but additionally the structure of the governing laws \citep{bongard2007lipson, brunton2016}. If this is done in a symbolic, i.e., human-readable form, this is often referred to as symbolic regression or symbolic recovery of governing equations. This has the potential to advance our modeling and understanding of complex dynamics across disciplines. The pioneering paper \citep{schmidt2009lipson}, e.g., accurately computes dynamics used in mechanics and biology in experiments. \citet{xu2020} display the ability to compute equations used in environmental science, fluid dynamics, high-energy physics, and electronic science. Moreover, \citet{martius2017lampert} were able to learn dynamics from atomic physics and robotics.

The common assumption in this field is that the underlying physical law can be described by a function or a differential equation. In this paper, we concentrate on the case that the physical law is described by a differential equation. Therefore, we focus on methods, which take data from a solution $(u(t_i,x_j))_{ij}\subset\R$ of some ordinary differential equation (ODE)
\begin{equation} \label{eq:ode}
   F\left(u,\frac{\partial u}{\partial t}, \frac{\partial^2 u}{\partial^2 t}, ...\right)=\frac{\partial^n u}{\partial^n t}
\end{equation}
or partial differential equation (PDE)
\begin{equation} \label{eq:pde}
   F\left(u,\frac{\partial u}{\partial x_1},...,\frac{\partial u}{\partial x_m}, \frac{\partial^2 u}{\partial^2 x_1}, \frac{\partial^2 u}{\partial x_1 \partial x_2}, ...\right)=\frac{\partial^n u}{\partial^n t}
\end{equation}
as input and output the function $F$ describing the corresponding differential equation. If the structure of $F$ is known and the goal is to estimate specific parameters of it, we refer to it as the problem of \emph{parameter estimation}. If the structure and the parameters of $F$ are both unknown, we refer to it as \emph{symbolic recovery of governing equations}.

One important problem raised and discussed in the context of parameter estimation
is the uniqueness and, thus, identifiability of the parameters of interest \citep{pohjanpalo1978system, cobelli1980parameter, walter1982global, kohn1985determining, calderon2006inverse, alberti2019calderon}. The uniqueness of parameters is important for parameter identification as otherwise it hinders the algorithm from finding the correct parameters. Even though this topic has been well addressed in the context of parameter estimation, it has been so far ignored for the symbolic recovery of governing equations. Furthermore, we argue that the problem should be even more prominent in this field, since the approaches in this field decrease the assumptions on the governing equations and, therefore, potentially increase the ambiguities. For this reason, we aim to address the uniqueness problem of governing equations in the context of symbolic recovery of differential equations.

To establish that uniqueness is indeed not given in general but is highly desirable, we provide the following example, see \citet{Rudy2017DatadrivenDO}.

\begin{example} \label{ex:kdv-equation}
    The Korteweg–De Vries (KdV) equation is defined as
    \begin{equation}
        u_t=6uu_x-u_{xxx}
    \end{equation}
    and is solved by the 1-soliton (= self-reinforcing wave) solution
    \begin{equation} \label{eq:kdv-solution}
        u(t,x)=\frac{c}{2}\sech^2\left(\frac{\sqrt{c}}{2}(x-ct-a)\right)
    \end{equation}
    for $t\in\R_{>0}$ and $x\in\R$.
    Here, $\sech(x)=\frac{1}{\cosh(x)}$ is the hyperbolic secant. The problem for symbolic regression algorithms is that $u$ also solves the one-way wave equation $u_t=-cu_x$. Thus, the PDE is not identifiable, i.e., the learning algorithm cannot know which PDE to select and might output the wrong one \citep{Rudy2017DatadrivenDO}.
\end{example}

To the best of our knowledge, Example~\ref{ex:kdv-equation} in \citet{Rudy2017DatadrivenDO} is the only time this uniqueness problem is mentioned in the symbolic recovery of governing equation literature and it has never been tackled theoretically. Thus, this phenomenon poses a potential issue for scientific insights inferred using the existing methods, since it remains unclear if or when one can trust the learned equations. 

\subsection{Related Work} \label{sec:related-work}

\paragraph{Symbolic regression.}
Symbolic regression aims to identify the symbolic relation between input variables $x_1,...,x_n$ and the output variable $y$ without assuming a pre-defined model. Most approaches rely on Genetic Programming to heuristically search the space of functions \citep{augusto2000symbolic, schmidt2009distilling, Schmidt2010AgefitnessPO, cranmer2023interpretable}. Similarly, reinforcement learning has been applied to optimize for the simplest and most accurate formula \citep{petersen2019deep, sun2022symbolic} and combined with genetic programming \citep{mundhenk2021symbolic}. \citet{udrescu2020tegmark} simplify the problem recursively by applying common symmetries of physical laws. Other approaches translate the discrete optimization problem to a continuous one by parametrizing the space of functions as linear combination of functions \citep{McConaghy2011, brunton2016}, complex compositions of standard base functions \citep{martius2016extrapolation, sahoo2018learning,scholl2023parfam}, or Meijer-G functions \citep{alaa2019meijerg, crabbe2020learning}. Recently, transformers have been used to leverage pre-training on synthetic data to predict symbolic formulas directly \citep{biggio2021neural, kamienny2022end, holt2022deep}.

\paragraph{Symbolic recovery of differential equations.}
The algorithms mentioned above for symbolic regression are also suitable candidates to recover a differential equation symbolically, by using derivatives as the variables $x_i$ and/or $y$ \citep{la2021contemporary}. Sparse Identification of Non-linear Dynamics (SINDy) \citep{brunton2016}, which applies sparse regression to recover the closed-form ODE, has been studied extensively in this context and extended in various ways: \citet{Rudy2017DatadrivenDO} extended it to PDEs, \citet{kaheman2020sindy} to implicit PDEs, \citet{champion2019} introduced an autoencoder to learn the features and \citet{quade2018sparse} enabled it to handle abrupt changes in the system dynamics.
Other approaches utilize symbolic regression algorithms but reinforce them for the special case of differential equations, which is mostly motivated by the instability of the numerical derivative. One possible solution is to approximate the solution first with a neural network \citep{Hasan2020, both2021choudhury, stephany2022pde,chen2021physics}. \citet{qian2022dcode} instead reformulate the ODE as a variational problem to avoid the computation of the derivatives at all and \citet{kacprzyk2023dcipher} extend it to a general class of PDEs.

\paragraph{Parameter estimation and identifiability in differential equations.} 
A related and more mature field is parameter estimation in differential equations, which assumes that the structure of a differential equation is already known and only specific parameters have to be inferred from the data \citep{alessandrini1986identification, acar1993identification, knowles2001parameter}. This field is equipped not only with efficient algorithms but also with developed theories. The two main theoretical concerns are stability and uniqueness. \citet{acar1993identification}, for example, shows that the space-distributed diffusion coefficient in the steady-state diffusion equation can be stably recovered using least squares with Tikhonov regularisation. The uniqueness problem refers to the question if it is possible to recover all parameters of a differential equation using only data about the solution \citep{pohjanpalo1978system, cobelli1980parameter, walter1982global}. \citet{VAJDA1984} analyze the identifiability of linear, bilinear, polynomial, and rational ODEs using structural invariants aiming to unify many of the early results in this field. \citet{hong2020global} introduce an algebraic criterion along with an algorithm for identifiability of ODEs. \citet{ovchinnikov2021computing} take another approach for ODEs and introduce an algorithm that computes all identifiable functions and further introduces a result concerning the identifiability from multiple experiments.

\subsection{Our Contributions} \label{sec:our-contributions}

In this paper, we aim to address the uniqueness problem of symbolic recovery for differential equations.
We provide the first necessary and sufficient conditions for the uniqueness of PDEs for different function spaces for $F:\R^k\rightarrow\R$ in \eqref{eq:pde} in the context of symbolic recovery of differential equations. Our choices are linear, polynomial, algebraic, analytic, smooth, and continuous functions, as these cover large parts of important PDEs and their structure enables a systematic analysis of uniqueness. After proving the necessary and sufficient conditions for the uniqueness of PDEs, we apply these to specific classes of ODEs. We then introduce numerical algorithms to check the uniqueness of PDEs and show their correctness in numerical experiments.

Before we summarize our uniqueness results for the different classes of PDEs, we start by rigorously defining uniqueness. Note that we write $F(g)=f$, with functions $g:U\rightarrow\R^k$, $f:U\rightarrow \R$, $F:\R^k\rightarrow\R$, and $U\subset\R^{m+1}$ open, if $F(g(t,x))=f(t,x)$ for all $(t,x)\in U$. If $F(g(t,x))=0$ for all $(t,x)\in U$, we write $F(g)=0$.

\begin{definition} [Uniqueness] \label{def:unique}
    Let $u:U\rightarrow\R$ be a differentiable function on the open set $U\subset\R^{m+1}$. Let each $g_1,...,g_k:U\rightarrow\R$ be either a projection on one of the coordinates, any derivative of $u$ that exists, or the function $u$. Denote $g=(g_1,...,g_k):U\rightarrow\R^k$. Let $n\in\N\setminus\{0\}$ such that the $n^{th}$ time derivative of $u$ exists. Further, let $V$ be a set of functions which map from $\R^k$ to $\R$ and $F\in V$ such that
    \begin{equation} \label{eq:def-unique}
       \frac{\partial^n u}{\partial^n t}
       =F(g).
    \end{equation}
    We say that the function $u$ \emph{solves a unique differential equation described by functions in $V$ for $g=(g_1,...,g_k)$ with time derivative of $n^{th}$-order} if $F$ is the unique function in $V$ such that \eqref{eq:def-unique} holds. If $n=1$, we often omit the last part.
\end{definition}

With this definition we can express that $u$ as in Example~\ref{ex:kdv-equation} does not solve a unique differential equation described by polynomials for $g=(u,u_x,u_{xx},u_{xxx})$ with time derivative of $1^{st}$-order. Definition~\ref{def:unique} captures a broader class of PDEs than those described in \eqref{eq:ode} and \eqref{eq:pde}. For example, by setting $g:U\rightarrow\R^{m+l+1},\;g(t,x)=(t,x_1,...,x_m,u_{\alpha^1}(t,x),...,u_{\alpha^l}(t,x))$, Definition~\ref{def:unique} covers PDEs which directly depend on $t$ and $x$:
\begin{equation}
    \frac{\partial^n u}{\partial^n t}
       =F(t,x_1,...,x_m,u_{\alpha^1}(t,x),...,u_{\alpha^l}(t,x)),
\end{equation}

where $\alpha^1,...,\alpha^l\in\N^m$ are \emph{multi-indices}. For a multi-index $\alpha\in\N^m$, we write $u_\alpha=\frac{\partial^{|\alpha|} u}{\partial^{\alpha_1}x_1...\partial^{\alpha_m}x_m}$ with $|\alpha|=\sum_{i=1}^m\alpha_i$. Furthermore, Definition~\ref{def:unique} covers ODEs, e.g., an autonomous second-order ODE
\begin{equation}
    u_{tt}=F(u,u_t)
\end{equation}
can be expressed by setting $n=2$ and $g=(g_1,g_2)=(u,u_t)$. We will use this flexibility for various classes of ODEs in Section~\ref{sec:uniqueness-of-odes}. 

Throughout this paper, let $u$, $U$, $g_1,...,g_k$ and $n$ be as described in Definition~\ref{def:unique}. 
Our results for the different function classes are summarized in Table~\ref{tab:uniqueness-simple-pde}. This shows that a function solves a unique linear PDE if and only if the functions $g_1,...,g_k$ are linearly independent. For polynomial or algebraic PDEs\textemdash{}meaning that the function $F:\R^k\rightarrow\R$ is a polynomial or algebraic function\textemdash{}we show that the identifiability is equivalent to various properties from algebraic geometry. This yields the strong statement, that for algebraic solutions $u$ the uniqueness is equivalent to the Jacobian of $g$ having full rank for some $t,x\in U$. Interestingly, this is the sufficient condition for continuously differentiable functions $u$ for the uniqueness of analytic PDEs. In Section~\ref{sec:uniqueness-analytic-PDEs} we will also prove that it is not a necessary one. Continuous and differentiable PDEs, i.e., the function $F:\R^k\rightarrow\R$ is continuous or differentiable, are the most complicated PDEs as uniqueness for them is equivalent to the density of the image of $g$.

\begin{table}[h]
    \centering
    \caption{Uniqueness for $F(g_1,...,g_k)=\frac{\partial^n u}{\partial^n t}$. The second column shows conditions that are equivalent or sufficient to the uniqueness of a PDE in the class of the first column.}
    \label{tab:uniqueness-simple-pde}
    \begin{tabular}{ |c|c| } 
    \hline
     PDE class & Equivalent (E) or sufficient (S) for uniqueness  \\ 
     \hline \hline
     Linear PDE & (E) $g_1,...,g_k$ are  linearly independent \\ 
     \hline
     \makecell{Polynomial or \\ algebraic PDE} & \makecell{(E) There is no hypersurface/algebraic set $A\neq\R^k$ such that $\mathcal{D}\subset A$ \\
     (E) $\mathcal{D}$ has a non trivial "correspondence $I$" \\
     (E) Assuming $u$ is algebraic: \\ $k\leq m+1 $ and Jacobian $J_g(t,x)$ has full rank for some $t,x\in U$}  \\ 
     \hline
     Analytic PDE &  \makecell{(E) There is no C-analytic set $A\neq\R^k$ such that $\mathcal{D}\subset A$ \\ 
     (S) There is no analytic set $A\neq\R^k$ such that $\mathcal{D}\subset A$ \\
     (S) $\lambda^k(\mathcal{D})>0$ \\ 
     (S) The Jacobian $J_g(t,x)$ is full-rank for some $t,x\in U$}  \\ 
    \hline
     $C^p$, $0\leq p \leq \infty$  & (E) $\overline{\mathcal{D}}=\R^k$ \\ 

     \hline
    \end{tabular}

\end{table}

\subsection{Outline}

The outline of the paper is as follows. We start with proving necessary and sufficient conditions for uniqueness in the sense of Definition~\ref{def:unique} for various classes of PDEs in Section~\ref{sec:uniqueness-of-pdes}. In Section~\ref{sec:uniqueness-of-odes} we apply these results to specific classes of ODEs, where we utilize the additional structure to deduce stronger statements. Based on the previous sections, Section~\ref{sec:numerical-tests} is devoted to deriving numerical algorithms that allow us to determine whether a function solves a unique PDE. We validate these algorithms in Section~\ref{sec:experiments} by extensive numerical experiments.

\section{Uniqueness of PDEs} \label{sec:uniqueness-of-pdes}

In this section, we aim to derive conditions that guarantee the uniqueness/non-uniqueness of PDEs in the sense of Definition~\ref{def:unique}. Throughout this section we let $u:U\rightarrow\R$ be a differentiable function on the open set $U\subset\R^{m+1}$ and each $g_1,...,g_k:U\rightarrow\R$ either a projection on one of the coordinates, e.g., $g_i(t,x)=x_j$ for $(t,x)\in U$ and $j\in\{1,...,m\}$, any derivative of $u$ that exists, or simply the function $u$. Furthermore, we let $n\in\N\setminus\{0\}$ be such that $\frac{\partial^n u}{\partial^n t}$ exists.

We continue by defining a property over function spaces which turns out to be equivalent to the uniqueness of the PDE in the given function space.

\begin{definition}
    Let $g:U\rightarrow\R^k$, $U\subset\R^{m+1}$ open, be some function and $V$ any set of functions mapping from $\R^k$ to $\R$ including the constant zero-function. Then we say that $g$ is \emph{non-trivially annihilated in} $V$, if there exists some $H\in V\setminus\{0\}$ such that $H\circ g = H(g)=0$. Otherwise, we say that $g$ is \emph{only trivially annihilated in} $V$.
\end{definition}

It is straightforward to prove that the function $g=(g_1,...,g_k)$ is only trivially annihilated in a function class closed under addition and subtraction if and only if $u$ solves a unique PDE described by the given function class for $g_1,...,g_k$ with time derivatives of $n^{th}$-order. 

\begin{proposition} \label{pro:general-uniqueness}
    Define the mapping $g=(g_1,...,g_k):U\rightarrow\mathbb{R}^k$, with $U\subset\R^{m+1}$ open and $g_1,...,g_k$ as in Definition~\ref{def:unique}. Let $V$ be any class of functions mapping from $\R^k$ to $\R$ which is closed under addition and subtraction. Assume there exists $F\in V$ such that $F(g)=\frac{\partial^n u}{\partial^n t}$. Then, $F$ is the unique function in $V$ such that $F(g)=\frac{\partial^n u}{\partial^n t}$ if and only if $g$ is only trivially annihilated in $V$.    
\end{proposition}
\begin{proof}
    Assume now that $g$ is only trivially annihilated in $V$. Let $G\in V$ be a function in $V$ such that 
    $G(g)=F(g)=\frac{\partial^n u}{\partial^n t}$. Then, it follows that $(F-G)(g)=F(g)-G(g)=0$. Since $V$ is closed under subtraction, $F-G\in V$. However, $g$ is only trivially annihilated in $V$ and, therefore, $F-G=0$ must hold. Therefore, $F$ is the unique function in $V$ such that $F(g)=0$.

    Assume now that $g$ is non-trivially annihilated in $V$. Let $H\in V\setminus\{0\}$ be the non-zero function that annihilates $g$, i.e., $H(g)=0$. Set $G=F+H$. Since $V$ is closed under addition, we know that $G\in V$. As $G\neq F$ and $G(g)=F(g)+H(g)=F(g)=\frac{\partial^n u}{\partial^n t}$, $F$ is not unique in $V$.
\end{proof}

Note that any vector space $V$ fulfills the necessary properties for Proposition~\ref{pro:general-uniqueness}, which we use in the rest of the paper, but they are also fulfilled by simpler structures.
In the following subsections, we use Proposition~\ref{pro:general-uniqueness} to deduce uniqueness criteria for several function classes, all summarized in Section~\ref{sec:our-contributions} and Table~\ref{tab:uniqueness-simple-pde}.

\subsection{Linear PDEs} \label{sec:uniqueness-linear-pdes}

The smallest set of PDEs we investigate is the class of linear PDEs, which is well understood theoretically and covers important equations such as the heat and wave equation. In the following, we prove that a function is non-trivially annihilated in the set of linear functions if and only if its coordinates are linearly dependent.

\begin{corollary}[Uniqueness for linear PDEs]\label{cor:uniqueness-for-linear-pdes}
    Define $g=(g_1,...,g_k):U\rightarrow\mathbb{R}^k$, with $U\subset\R^{m+1}$ open and $g_1,...,g_k:U\rightarrow\R$ as in Definition~\ref{def:unique}. Assume there exists at least one linear function $F:\R^k\rightarrow\R$ with $F(g)=\frac{\partial^n u}{\partial^n t}$. Then $F$ is the unique linear function such that $F(g)=\frac{\partial^n u}{\partial^n t}$ if and only if $g_1,...,g_k$ are  linearly independent.
\end{corollary}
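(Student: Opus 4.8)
The plan is to reduce everything to Proposition~\ref{pro:general-uniqueness}. Take $V$ to be the set of linear functions $\R^k \to \R$; this is closed under addition and subtraction and contains the zero function, so the proposition applies. Since a linear $F$ with $F(g) = \frac{\partial^n u}{\partial^n t}$ is assumed to exist, uniqueness of such an $F$ is equivalent to $g$ being \emph{only trivially annihilated in} $V$. Hence the whole corollary reduces to the purely linear-algebraic statement: $g = (g_1,\dots,g_k)$ is non-trivially annihilated in the linear functions if and only if $g_1,\dots,g_k$ are linearly dependent.

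For that equivalence I would argue both directions by directly translating the definitions. If $g_1,\dots,g_k$ are linearly dependent, pick $c_1,\dots,c_k \in \R$, not all zero, with $\sum_{i=1}^k c_i g_i \equiv 0$ on $U$, and set $H(y) = \sum_{i=1}^k c_i y_i$; then $H$ is linear, $H$ is not the zero function on $\R^k$ because $(c_1,\dots,c_k) \neq 0$, and $H(g) = \sum_i c_i g_i \equiv 0$, so $g$ is non-trivially annihilated in $V$. Conversely, if some linear $H \neq 0$ satisfies $H(g) = 0$, write $H(y) = \sum_i c_i y_i$; then $H \neq 0$ forces $(c_1,\dots,c_k)\neq 0$, and $H(g) = 0$ reads $\sum_i c_i g_i \equiv 0$ on $U$, which is exactly a nontrivial linear relation among $g_1,\dots,g_k$. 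Contraposing gives the stated iff.

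There is no serious obstacle here; the argument is a definition-chase. The only point needing care is to keep two notions of ``being zero'' separate: ``$H$ is the zero function on $\R^k$'', which is equivalent to all coefficients $c_i$ vanishing, versus ``$H \circ g$ is the zero function on $U$'', which is a linear dependence among the $g_i$ and does \emph{not} imply the former (the image $g(U)$ need not span $\R^k$). Non-trivial annihilation asks for the former to fail for $H$ while the latter holds for $H\circ g$. I would also state explicitly that ``linear'' is meant in the strict homogeneous sense $H(y) = \sum_i c_i y_i$; the affine case is subsumed by adjoining a constant coordinate to $g$, consistent with how Definition~\ref{def:unique} already allows coordinate projections among the $g_i$.
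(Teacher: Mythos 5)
Your proposal is correct and follows exactly the paper's route: the paper also invokes Proposition~\ref{pro:general-uniqueness} with $V$ the linear functions and observes that a non-zero linear $H$ with $H(g)=0$ exists if and only if $g_1,\dots,g_k$ are linearly dependent. You merely spell out the definition-chase that the paper leaves implicit, including the worthwhile remark distinguishing ``$H=0$ on $\R^k$'' from ``$H\circ g=0$ on $U$''.
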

\begin{proof}
    The corollary follows immediately from Proposition \ref{pro:general-uniqueness}, since there exists a non-zero linear function $H:\R^k\rightarrow\R$ with $H(g)=0$ if and only if $g_1,...,g_k$ are linearly dependent.
\end{proof}

\subsection{Polynomial and Algebraic PDEs} \label{sec:uniqueness-algebraic-pdes}

The next larger class are polynomial PDEs. Surprisingly, Proposition~\ref{pro:existence-of-algebraic-function-equivalent-polynomial} shows that a function is non-trivially annihilated in the set of polynomial functions if and only if it is non-trivially annihilated in the set of algebraic functions. Thus, all the results in this subsection apply to both classes. 

Similar to, for example, \citet{meshkat2015identifiability,jain2019priori,hong2020global,ovchinnikov2021computing}, who utilized tools from algebra to determine the identifiability for parameter estimation, we develop algebraic criteria for this class of PDEs. We start with defining algebraic functions.

\begin{definition} \label{def:algebraic-function}
    We call a function an \emph{algebraic function} if it solves an equation defined by a non-zero polynomial with real coefficients which is irreducible over $\R$. In other words, an algebraic function is a function $f:U\rightarrow\R$ on an open set $U\subset\R^m$ such that there exists an irreducible non-zero polynomial $p:\R^{m+1}\rightarrow\R$ with $p(x,f(x))=0$ for all $x\in U$.
\end{definition}

Algebraic functions encompass polynomial functions, rational functions, and also non-elementary functions, as can be seen by the Abel-Ruffini Theorem \citep{ruffini1813,abel1813}. Furthermore, the roots and inverse of an algebraic function, if it exists, is algebraic. Thus, this class of PDEs is very general, including the KdV-equation from Example~\ref{ex:kdv-equation}, Helmholtz equation, Burgers' equation, minimal surface equation, and many more.

Polynomials are strongly investigated in the fields of algebra and algebraic geometry and it is therefore beneficial to reformulate the problem in algebraic language. In Appendix \ref{app:fundamentals-of-algebra} we introduce all concepts from algebra which are necessary to follow the proofs in this section. Let us start with algebraic geometry \citep{Kasparian2010}.

\begin{definition} \label{def:algebraic-set}
    Let $ K$ be a field. We call a subset $X\subset K^n$ an \emph{algebraic set} if there exists an ideal $A\subset K[x_1,...,x_n]$ such that $X=\{x\in K^n|f(x)=0,\;\forall f\in A\}$. If $A$ is a principal ideal, i.e., there exists a polynomial $f\in K[x_1,...,x_n]$ such that $A=(f)$, then $X$ is called a \emph{hypersurface} in $ K^n$. 
    
    Given $X\subset K^n$ we define the \emph{correspondence $I$} as
    \begin{equation}
        I(X)\coloneqq\{f\in K[x_1,...,x_n]|f(x)=0,\;\forall x\in X\}.
    \end{equation}
\end{definition}

By definition, these quantities provide two equivalent characterizations of the image $\mathcal{D}=g(U)$ such that $g=(g_1,...,g_k)$ is only trivially annihilated in the set of polynomial functions and in the set of algebraic functions:
\begin{itemize}
    \item[(1)] There exists no algebraic set $X\neq \R^k$ such that $\mathcal{D}\subset X$. In particular, if $g$ is non-trivially annihilated in the set of polynomial functions, then there exists a hypersurface $X$ such that $\mathcal{D}\subset X$.
    \item[(2)] The correspondence $I$ of $\mathcal{D}$ is trivial, i.e., $I(\mathcal{D})=\{0\}$.
\end{itemize}

Additionally, classical algebra provides us with an important definition we will use in the remainder of this section.

\begin{definition} \label{def:algebraic-dependence}
    We call functions $f_1,...,f_q:K^p\rightarrow K$ \emph{algebraically dependent over a field $ K$}, if there exists a polynomial $P\in K[x_1,...,x_q]\backslash \{0\}$ such that $P(f_1(x_1,..,x_p),...,f_q(x_1,..,x_p))=0$. If no such $P$ exists, we call $f_1,...,f_q$ \emph{algebraically independent}.
\end{definition}

This yields a third characterization for $g$ being only trivially annihilated in the set of polynomials:

\begin{itemize}
    \item[(3)] The functions $g_1,...,g_k$ are algebraically independent over $\R$. 
\end{itemize}

We now use this characterization to prove that a function is non-trivially annihilated in the set of polynomial functions if and only if it is non-trivially annihilated in the set of algebraic functions. Thus, all the results in this subsection apply to polynomial and algebraic PDEs.

\begin{proposition} \label{pro:existence-of-algebraic-function-equivalent-polynomial}
    Let $f_1,...,f_k:U\rightarrow\R$ be functions such that there exists a non-zero algebraic function $F:\mathcal{D}\rightarrow\R$ with $F(f_1(x),...,f_k(x))=0$ for each $x\in U$, with $U\subset\R^m$ open and $\mathcal{D}$ the image of $(f_1,...,f_k):U\rightarrow\R^k$. Then, there exists a non-zero polynomial $P:\R^k\rightarrow\R$ such that $P(f_1(x),...,f_k(x))=0$.
\end{proposition}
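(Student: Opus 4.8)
The plan is to extract the defining equation of $F$ from Definition~\ref{def:algebraic-function} and then substitute the last coordinate by $0$. Concretely, since $F$ is a non-zero algebraic function there is an irreducible non-zero polynomial $p\in\R[y_1,\dots,y_k,z]$ with $p(y,F(y))=0$ for every $y$ in the domain of $F$, in particular for every $y\in\mathcal{D}$. At this point I would record the one observation that is not purely formal: $p$ is not a scalar multiple of the polynomial $z$. Indeed, if $p=cz$ for some $c\in\R\setminus\{0\}$, then $p(y,F(y))=0$ would force $F\equiv 0$, contradicting that $F$ is non-zero.

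Next I would use the hypothesis $F(f_1(x),\dots,f_k(x))=0$ for all $x\in U$. Writing $g=(f_1,\dots,f_k)$ and recalling that $\mathcal{D}=g(U)$, this says exactly that $F$ vanishes identically on $\mathcal{D}$. Hence, for every $x\in U$,
\[
    p\bigl(g(x),0\bigr)=p\bigl(g(x),F(g(x))\bigr)=0 .
\]
So, setting $P(y):=p(y,0)\in\R[y_1,\dots,y_k]$, we obtain a polynomial with $P(f_1(x),\dots,f_k(x))=0$ for all $x\in U$, which is the desired annihilating relation — provided $P$ is not the zero polynomial.

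Finally I would rule out $P\equiv 0$. If $p(y,0)$ were the zero polynomial, then $p(y,z)$ would contain no monomial free of $z$, i.e.\ $z\mid p$ in the unique factorization domain $\R[y_1,\dots,y_k,z]$. Since $p$ is irreducible and $z$ is not a unit, $p$ and $z$ must be associates, so $p=cz$ for some $c\in\R\setminus\{0\}$ — contradicting the observation from the first paragraph. Therefore $P\neq 0$, and $P$ is the sought non-zero polynomial. The only place that needs care is this last case distinction together with the reading of ``non-zero'' for $F$ (which is precisely what excludes $p=cz$); the rest is a one-line substitution, so I do not expect a genuine obstacle.
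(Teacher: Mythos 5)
Your proof is correct and follows essentially the same route as the paper's: extract the irreducible defining polynomial $p$ of $F$, substitute $z=0$ to get $P(y)=p(y,0)$, and rule out $P\equiv 0$ by noting that $z\mid p$ together with irreducibility would force $p$ to be an associate of $z$ and hence $F\equiv 0$. Your phrasing of the last step via unique factorization is a slightly cleaner packaging of the paper's factorization $p(y,t)=q(y,t)\,t$ with $q$ constant, but the argument is the same.
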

\begin{proof}
    As $F$ is an algebraic function, there exists an irreducible non-zero polynomial $p:\R^k\times\R\rightarrow\R$ such that $p(y,F(y))=0$ holds for all $y\in\mathcal{D}$. We start with proving that $p(y,0)\neq 0$ for some $y\in\R^k$. 
    
    Towards a contradiction, we assume that $p(y,0)= 0$ for all $y\in\R^k$. Then there exists some polynomial $q:\R^k\times\R\rightarrow\R$ such that $p(y,t)=q(y,t)t$ for all $y\in \R^k$ and $t\in\R$. As $p$ is irreducible and non-zero, $q$ must be constant and non-zero. Thus, $q(y,F(y))F(y)=0$ implies that $F(x)=0$ for all $y\in\mathcal{D}$, which is a contradiction to the assumption. This yields that there exists $y\in\R^k$ such that $p(y,0)\neq 0$.
    
    Obviously, also $p(f_1(x),...,f_k(x),F(f_1(x),...,f_k(x)))=0$ holds for all $x\in U$. Furthermore, $F(f_1(x),...,f_k(x))=0$ yields $p(f_1(x),...,f_k(x),0)=0$. By setting $P(x)=p(x,0)$, we obtain a non-zero polynomial $P:\R^k\rightarrow\R$ with $P(f_1(x),...,f_k(x))=0$.
\end{proof} 

Having established Proposition~\ref{pro:existence-of-algebraic-function-equivalent-polynomial}, we are interested in understanding when a function is non-trivially annihilated in the set of polynomial functions and in the set of algebraic functions in more detail. For continuous differentiable functions $u$, we develop sufficient conditions for uniqueness of analytic PDEs in the next subsection. These can also be used for the smaller class of polynomial PDEs and, thus, also for algebraic PDEs. In the remainder of Subsection~\ref{sec:uniqueness-algebraic-pdes} we restrict $g_1,...,g_k$ to be algebraic functions, as one can prove additional results for polynomial and algebraic PDEs then. 

We start with citing the result that there exist at most $m$ algebraically independent algebraic functions for $m$ unknowns \citep{EhrenborgRota1993}. 

\begin{theorem} \label{thm:algebraic-dependenceof-m+1-polynomials}
    Let $f_1,...,f_{m+1}:U\rightarrow\R$, $U\subset\R^m$ open, be $m+1$ algebraic functions in $m$ variables. Then, $f_1,...,f_{m+1}$ are algebraically dependent. 
\end{theorem}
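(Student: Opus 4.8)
The plan is to reduce the statement to a transcendence-degree argument over the rational function field. Consider the field $\R(x_1,\dots,x_m)$ of rational functions in $m$ variables, which has transcendence degree exactly $m$ over $\R$. Since each $f_i$ is an algebraic function, it satisfies an irreducible polynomial relation $p_i(x_1,\dots,x_m,f_i)=0$; in particular each $f_i$ lies in (or rather generates) an algebraic extension of $\R(x_1,\dots,x_m)$. The key point is that adjoining algebraic elements does not increase transcendence degree: the field $\R(x_1,\dots,x_m,f_1,\dots,f_{m+1})$ is an algebraic extension of $\R(x_1,\dots,x_m)$, hence also has transcendence degree $m$ over $\R$.

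Now I would invoke the standard fact that $m+1$ elements of a field of transcendence degree $m$ over $\R$ must be algebraically dependent over $\R$: any transcendence basis has size $m$, so the $m+1$ elements $f_1,\dots,f_{m+1}$ cannot be algebraically independent. This yields a nonzero polynomial $P\in\R[y_1,\dots,y_{m+1}]$ with $P(f_1,\dots,f_{m+1})=0$ as an identity of rational functions, hence pointwise on $U$ (shrinking $U$ if necessary to avoid poles, which is harmless since $U$ is open and the locus where a rational function has a pole is a proper closed subset). This is precisely the conclusion.

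One subtlety to address carefully is the passage between the algebraic-geometry picture over the algebraic closure and the statement over $\R$, and the fact that \emph{algebraic function} as defined here (Definition~\ref{def:algebraic-function}) means a genuine $\R$-valued function on an open set satisfying an irreducible real-polynomial equation — I should confirm that such a function indeed determines an element algebraic over $\R(x_1,\dots,x_m)$, i.e. that the polynomial relation $p_i(x,f_i)=0$ is nontrivial as a relation in the function field (it is, since $p_i$ is a nonzero polynomial and $x_1,\dots,x_m$ are algebraically independent, so $p_i(x, T)$ is a nonzero polynomial in $T$ over $\R(x_1,\dots,x_m)$). The cleanest route is to cite the cited source \cite{EhrenborgRota1993} for the precise statement ``at most $m$ algebraically independent algebraic functions in $m$ variables'' and spell out only the transcendence-degree bookkeeping above.

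The main obstacle I anticipate is purely expository rather than mathematical: making the transcendence-degree argument rigorous requires being precise about which field the $f_i$ live in and that they are algebraic over $\R(x_1,\dots,x_m)$ (not merely over $\R$), and handling the domain-of-definition issues when converting an identity of rational/algebraic functions into a pointwise identity on $U$. The underlying algebra — additivity of transcendence degree in towers and the pigeonhole bound on the size of a transcendence basis — is standard and can be quoted.
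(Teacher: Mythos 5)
Your argument is correct, but it follows a genuinely different route from the paper: the paper does not prove Theorem~\ref{thm:algebraic-dependenceof-m+1-polynomials} at all, it imports the statement from Ehrenborg and Rota \cite{EhrenborgRota1993} (who formulate it over $\C$) and only adds Lemma~\ref{lem:real-function-algebraic-dependent} to transfer algebraic independence from $\C$ to $\R$. Your transcendence-degree argument is the standard self-contained proof of the underlying fact, and it has the advantage of working directly over $\R$, so the $\C$-to-$\R$ transfer lemma becomes superfluous. The one point you flag that genuinely needs to be nailed down is that $x_1,\dots,x_m,f_1,\dots,f_{m+1}$ must sit in a common \emph{field}; the natural choice is the fraction field of the ring of algebraic functions on $U$, and one must verify this ring is an integral domain. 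It is: if $f\not\equiv 0$ satisfies an irreducible relation $p(x,f(x))=0$, then either $p(\cdot,0)\equiv 0$, in which case $t\mid p$ and irreducibility forces $p=ct$ and $f\equiv 0$, or the zero set of $f$ is contained in the proper algebraic set $\set{p(\cdot,0)=0}$ and is therefore nowhere dense; hence a product of two nonzero algebraic functions cannot vanish identically. With that in place, $\trdeg_\R \R(x_1,\dots,x_m,f_1,\dots,f_{m+1})=m$, the pigeonhole on a transcendence basis yields a nonzero $P\in\R[y_1,\dots,y_{m+1}]$ with $P(f_1,\dots,f_{m+1})=0$, and no shrinking of $U$ is needed: since $P$ is a polynomial and the $f_i$ are defined on all of $U$, the vanishing in the fraction field already means vanishing pointwise on $U$ (no denominators ever appear). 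Your approach buys a proof rather than a citation; the paper's approach buys brevity at the cost of an extra lemma and reliance on the external reference.
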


Contrary to us, \citet{EhrenborgRota1993} state Theorem~\ref{thm:algebraic-dependenceof-m+1-polynomials} for algebraic independence over $\C$. However, for real-valued functions $f_i:\R\rightarrow\R$, algebraic independence over $\R$ is equivalent to algebraic independence over $\C$, as we prove in Lemma~\ref{lem:real-function-algebraic-dependent} in Appendix \ref{app:fundamentals-of-algebra}.

Theorem~\ref{thm:algebraic-dependenceof-m+1-polynomials} shows that if $g_1,..,g_k$ are algebraic functions, we have to ensure that $k\leq m+1$, as otherwise it is certain that $u$ does not solve a unique polynomial PDE for $g=(g_1,...,g_k)$. In the case that we have at least as many unknown variables as functions, it is possible to use the Jacobian criterion \citep{EhrenborgRota1993}. This provides an equivalent condition for a function to be non-trivially annihilated in the set of polynomial functions and in the set of algebraic functions. Again, we must apply Lemma~\ref{lem:real-function-algebraic-dependent} from Appendix \ref{app:fundamentals-of-algebra} to obtain independence over $\R$.

\begin{theorem}[Jacobian criterion for algebraic independence]\label{thm:Jacobian-criterion-for-algebraic-independence}
    Define $g=(g_1,...,g_k):U\rightarrow\mathbb{R}^k$, with $U\subset\R^{m+1}$ open and $g_1,...,g_k:U\rightarrow\R$ as in Definition~\ref{def:unique}. Furthermore, assume that $g_1,..,g_k$ are algebraic functions and $k\leq m+1$. Then, $g_1,...,g_k$ are algebraically independent over $\R$ and, thus, $g$ is only trivially annihilated in the set of polynomial functions and in the set of algebraic functions, if and only if there exists one point $(t,x)\in U$ with $rank(J_g(t,x))= k$.
\end{theorem}

An interesting fact about algebraic functions is that the derivative of an algebraic function is again algebraic. This means that the condition $g_1,...,g_k$ being algebraic from Theorem~\ref{thm:algebraic-dependenceof-m+1-polynomials} and~\ref{thm:Jacobian-criterion-for-algebraic-independence} follows from $u$ being algebraic.

\begin{proposition} \label{pro:derivative-of-algebraic-function}
    Let $u:\R^m\rightarrow\R$ be an algebraic function and $\alpha\in\N^m$ be a multi-index such that $u_\alpha$ exists and is continuous. Then, $u_\alpha$ is an algebraic function.
\end{proposition}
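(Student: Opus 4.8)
The plan is to reduce the statement to the single-variable case in each partial derivative and then differentiate the defining polynomial relation implicitly. Since $u$ is algebraic, there is an irreducible non-zero polynomial $p:\R^{m+1}\to\R$ with $p(x,u(x))=0$ for all $x$ in the (open) domain. Fix a coordinate direction $x_i$ and differentiate the identity $p(x,u(x))=0$ with respect to $x_i$: by the chain rule,
\[
  \frac{\partial p}{\partial x_i}(x,u(x)) + \frac{\partial p}{\partial y}(x,u(x))\cdot \frac{\partial u}{\partial x_i}(x) = 0,
\]
where $y$ denotes the last argument of $p$. Since $p$ is irreducible and non-constant in $y$ (if $p$ did not depend on $y$, then $p(x,u(x))=0$ would force $p\equiv 0$ on the open set, hence $p\equiv 0$, contradicting non-zeroness), the polynomial $\partial p/\partial y$ is non-zero and of lower $y$-degree than $p$, so it is not divisible by $p$; because $p$ is irreducible this means $\partial p/\partial y$ does not vanish identically on the hypersurface $\{p=0\}$. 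Hence on a non-empty open subset of the domain we may solve
\[
  \frac{\partial u}{\partial x_i}(x) = -\frac{(\partial p/\partial x_i)(x,u(x))}{(\partial p/\partial y)(x,u(x))},
\]
exhibiting $u_{e_i}$ as a rational function of $x$ and $u(x)$. Since $u$ itself satisfies a polynomial equation over $\R(x)$ and rational functions of algebraic elements are algebraic, $u_{e_i}$ is algebraic; concretely, one clears denominators and eliminates $u$ from the two polynomial relations (e.g.\ via a resultant with respect to $y$) to produce an explicit non-zero polynomial $q$ with $q(x,u_{e_i}(x))=0$, then takes an irreducible factor actually vanishing on the graph of $u_{e_i}$.

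For a general multi-index $\alpha$ one iterates: having shown that each first-order partial derivative of an algebraic function is algebraic, an induction on $|\alpha|$ applies the first-order case $|\alpha|$ times, using that the algebraic functions form a class closed under taking partial derivatives. The continuity hypothesis on $u_\alpha$ guarantees that the derivative in question genuinely exists as a function on the domain so that the polynomial identity $q(x,u_\alpha(x))=0$ holds on an open set and the elimination is legitimate; one should also note that the set where $\partial p/\partial y$ vanishes is a proper algebraic subset, so the relation extends by continuity from the open dense complement to all of $U$.

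The main obstacle is handling the denominator $\partial p/\partial y$ cleanly: one must argue that it does not vanish identically on $\{p=0\}$ (using irreducibility of $p$, essentially separability, which is automatic in characteristic zero), so that the division above is valid on a dense open set and the resulting polynomial relation for $u_{e_i}$ is non-trivial. A secondary technical point is passing from "$u_{e_i}$ satisfies some non-zero polynomial" to "$u_{e_i}$ satisfies an \emph{irreducible} non-zero polynomial", which is immediate by factoring and choosing a factor that vanishes on the graph, but should be mentioned to match Definition~\ref{def:algebraic-function}. Everything else — the chain rule computation, the resultant elimination of $u$ — is routine algebra.
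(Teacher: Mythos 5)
Your proposal is correct and follows essentially the same route as the paper: differentiate the defining polynomial identity $p(x,u(x))=0$ implicitly, solve for the first-order partial, eliminate $u$ to obtain a non-zero polynomial relation for $u_{x_i}$, and induct on $|\alpha|$ (the paper packages the elimination via closure of algebraic functions under sums and products plus Proposition~\ref{pro:existence-of-algebraic-function-equivalent-polynomial}, where you use a resultant; the difference is cosmetic). One small repair: what you actually need is that $\partial p/\partial y$ does not vanish identically on the \emph{graph} of $u$, not merely on the hypersurface $\{p=0\}$ — this follows because $p$ is, up to a unit, the minimal polynomial of $u$ over $\R(x)$ while $\partial p/\partial y$ is non-zero of strictly smaller $y$-degree, which is the same end the paper reaches by choosing $p$ of minimal degree.
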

\begin{proof}
    We only show the statement for $|\alpha|=1$, as the case $|\alpha|>1$ follows by induction. 
    
    In the case $|\alpha|=1$, there exists $1\leq i\leq m$ such that $u_\alpha=u_i$. Let $x=(x_1,...,x_m)$. As $u$ is an algebraic function, we know that there exists a non-zero polynomial $p\in\R[x,t]$ such that 
    \begin{equation} \label{eq:proof-polynomial-of-algebraic-function}
        p(x,u(x))=0.
    \end{equation}
    Let $p$ be a polynomial with minimal degree such that \eqref{eq:proof-polynomial-of-algebraic-function} holds. Differentiating \eqref{eq:proof-polynomial-of-algebraic-function} with respect to $x_i$ yields 
    \begin{equation} \label{eq:proof-derivative-of-polynomial-of-algebraic-function}
        p_{x_i}(x,u(x))+p_t(x,u(x))u_i(x)=0.
    \end{equation}
    Now, define $f(x,t)=p_{x_i}(x,u(x))+p_t(x,u(x))t$. In Appendix \ref{app:fundamentals-of-algebra} we introduce the concept of algebraic functions as the algebraic closure of a rational function field and, thus, we know that the powers and sums of algebraic functions are again algebraic functions. It follows that $f$ is an algebraic function.
    
    Since we chose $p$ as the non-zero polynomial with minimal degree such that $p(x,u(x))=0$, we know that $p_{x_i}(x,u(x))$ and $p_t(x,u(x))$ are both non-zero and, therefore, $f$ is non-zero. Furthermore, \eqref{eq:proof-derivative-of-polynomial-of-algebraic-function} yields that $f(x,u_i(x))=0$. From Proposition~\ref{pro:existence-of-algebraic-function-equivalent-polynomial} it then follows that $u_i$ is an algebraic function.
\end{proof}

\subsection{Analytic PDEs} \label{sec:uniqueness-analytic-PDEs}
The logical extension of polynomials are analytic functions. This class is particularly large and encompasses most of the relevant PDEs, including\textemdash{}in addition to the subset of linear and polynomial PDEs\textemdash{}Liouville's equation, Zeldovich–Frank-Kamenetskii equation, Calogero–Degasperis–Fokas equation, and Josephson equations.

\begin{definition}[Analytic function \citep{Krantz1992APO}]
    A function $F\in C^\infty(X)$ with $X\subset\R^k$ open is called an \emph{analytic function} if at each point $x\in X$ there exists an open set $V\subset X$ with $x\in V$ such that the Taylor series of $F$ converges pointwise to $F$ on $V$. We denote the set of analytic functions on $X$ as $C^\omega(X)$.
\end{definition}

$C^\omega$ includes polynomials but also exponential and trigonometric functions. Furthermore, sums, products, and compositions of analytic functions are analytic functions as well, as are reciprocals of non-zero analytic functions and the inverse of an analytic function with a non-zero derivative. We remark that there are functions that are algebraic but not analytic such as $x^{1/3}$. However, there also exist functions that are analytic but not algebraic such as $\exp(x)$. More information on this broad set of functions can be found in \citet{Krantz1992APO}.

The analog of algebraic sets in algebraic geometry are analytic sets in analytic geometry \citep{Chirka1989}. 

\begin{definition}[Analytic set] \label{def:analytic-set}
    A set $A\subset\R^n$ is called an \emph{analytic set} if for each $a\in A$ there exists a neighbourhood $V$ such that $A\cap V = \{x\in\R^n|f_1(x)=...=f_m(x)=0\}$ for some analytic functions $f_1,...,f_m$.
\end{definition}

Hence, we can conclude that the image $\mathcal{D}$ of $g$ is a subset of an analytic set $A\neq\R^k$, if $g$ is non-trivially annihilated in $C^\omega$. However, we cannot conclude that $g$ is non-trivially annihilated in $C^\omega$, if its image $\mathcal{D}$ is a subset of an analytic set $A\neq\R^k$, because of the locality in the definition of analytic sets. Thus, the following definition from \citet{Acquistapace2017} is more fitting for our purposes.

\begin{definition}[C-analytic set] \label{def:c-analytic-set}
    A set $A\subset\R^n$ is called a \emph{C-analytic set}, if $A = \{x\in\R^n|f_1(x)=...=f_m(x)=0\}$ for some analytic functions $f_1,...,f_m$.
\end{definition}

For this class of sets, we obtain equivalence: A function $g$ is non-trivially annihilated in $C^\omega$ if and only if its image $\mathcal{D}$ is a subset of a C-analytic set $A\neq\R^k$.

We consider now a sufficient condition for the uniqueness of analytic functions which is proven in \citet{Mityagin}:

\begin{proposition} \label{pro:measure-condition-for-uniqueness-of-analytic-functions}
    Let $F:\R^k\rightarrow\R$ be an analytic function and $\mathcal{D}\subset\R^k$ a set with $\lambda^k(\mathcal{D})>0$, where $\lambda^k$ is the $k$-dimensional Lebesgue-measure. Then $F|_\mathcal{D}=0$ implies $F=0$.
\end{proposition}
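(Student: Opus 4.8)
I would prove the contrapositive in the following stronger form: \emph{if a real analytic $F:\R^k\to\R$ is not identically zero, then $\lambda^k\big(\{x\in\R^k : F(x)=0\}\big)=0$.} The proposition is then immediate, since $F|_\mathcal{D}=0$ gives $\mathcal{D}\subset\{F=0\}$, so $\lambda^k(\mathcal{D})>0$ would force $F\equiv 0$. The stronger claim I would establish by induction on the dimension $k$, slicing and invoking Tonelli's theorem.

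\textbf{Base case and slicing.} For $k=1$: a non-zero analytic function on the connected set $\R$ has only isolated zeros — otherwise its Taylor series would vanish to infinite order at a limit point of the zero set, so $F$ would vanish on a neighbourhood, and the usual clopen argument on connected $\R$ would give $F\equiv 0$; hence the zero set is countable and thus $\lambda^1$-null. For the inductive step, write $x=(x',t)\in\R^{k-1}\times\R$, note that $Z:=\{F=0\}$ is closed, hence measurable, and apply Tonelli:
\[
  \lambda^k(Z)=\int_{\R^{k-1}}\lambda^1\big(\{t\in\R : F(x',t)=0\}\big)\,d\lambda^{k-1}(x').
\]
For every $x'$ outside the degenerate set $B:=\{x'\in\R^{k-1} : F(x',\cdot)\equiv 0\text{ on }\R\}$, the slice $t\mapsto F(x',t)$ is a non-zero one-variable analytic function, so the inner integrand vanishes by the base case. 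It therefore suffices to show $\lambda^{k-1}(B)=0$.

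\textbf{Killing the degenerate set.} Here I would use that a one-variable analytic function on $\R$ is identically zero iff all its Taylor coefficients at $0$ vanish, so that $B=\bigcap_{j\ge 0}\{x' : h_j(x')=0\}$, where $h_j(x'):=\partial_t^j F(x',0)$ is analytic on $\R^{k-1}$. Not all $h_j$ can vanish identically: that would make $F$ vanish on the open slab $\R^{k-1}\times(-\varepsilon,\varepsilon)$ and hence, $\R^k$ being connected, everywhere — contradicting $F\not\equiv 0$. Picking $j_0$ with $h_{j_0}\not\equiv 0$, the induction hypothesis gives $\lambda^{k-1}(\{h_{j_0}=0\})=0$, and since $B\subset\{h_{j_0}=0\}$ we conclude $\lambda^{k-1}(B)=0$. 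Substituting back yields $\lambda^k(Z)=0$, which closes the induction.

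\textbf{Main difficulty.} There is no deep obstacle — this is the classical fact that zero sets of non-trivial real analytic functions are Lebesgue-null — but the conceptual point to handle carefully is that "$F=0$ on a positive-measure set" is strictly weaker than "$F=0$ on an open set", so the identity theorem cannot be applied directly; the dimension induction combined with Fubini/Tonelli is precisely the device that converts positive measure into an open set one dimension down. The only genuine technical care point is the fully degenerate slices, where the inner one-dimensional measure is $+\infty$ rather than $0$: these are harmless because their base $B$ is $\lambda^{k-1}$-null, which is exactly where the "not all $t$-Taylor coefficients vanish" argument enters.
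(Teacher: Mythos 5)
Your proof is correct. Note that the paper does not prove this proposition at all — it simply cites Mityagin for it — so there is no in-paper argument to compare against; your induction on $k$ via Tonelli slicing, with the degenerate slices $B=\bigcap_j\{h_j=0\}$ controlled by a single non-vanishing $t$-Taylor coefficient $h_{j_0}$, is precisely the classical argument (and essentially the one in the cited reference), and all the delicate points (measurability of the closed zero set, the possibly infinite inner measure over the null set $B$, and the fact that vanishing of all $h_j$ forces $F\equiv 0$ on the connected $\R^k$) are handled correctly. Your proposal thus serves as a complete, self-contained proof of the statement the paper only quotes.
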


This implies immediately the following corollary.

\begin{corollary}[Measure criterion for analytic functions] \label{cor:measure-condition-for-uniqueness-of-analytic-functions}
Define the function $g=(g_1,...,g_k):U\rightarrow\mathbb{R}^k$, with $U\subset\R^{m+1}$ open, $g_1,...,g_k:U\rightarrow\R$ as in Definition~\ref{def:unique} and set $\mathcal{D}\coloneqq g(U)$. If $\lambda^k(\mathcal{D})>0$, then $g$ is only trivially annihilated in $C^\omega$.
\end{corollary}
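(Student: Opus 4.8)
The plan is to argue by contraposition and merely unwind the definition of ``non-trivially annihilated'' so that the cited Proposition~\ref{pro:measure-condition-for-uniqueness-of-analytic-functions} applies verbatim. Suppose, towards a contradiction, that $g$ is \emph{not} only trivially annihilated in $C^\omega$. Then by definition there exists some $H\in C^\omega(\R^k)\setminus\{0\}$ with $H(g)=0$, which, by the notational convention fixed just before Definition~\ref{def:unique}, means precisely that $H(g(t,x))=0$ for every $(t,x)\in U$. Since $\mathcal{D}=g(U)$, this is exactly the statement $H|_{\mathcal{D}}=0$.

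Next I would invoke Proposition~\ref{pro:measure-condition-for-uniqueness-of-analytic-functions} with $F=H$: its hypothesis $\lambda^k(\mathcal{D})>0$ is assumed in the corollary, and we have just shown $H|_{\mathcal{D}}=0$, so the proposition forces $H=0$. This contradicts $H\neq 0$. Hence no non-trivial annihilator in $C^\omega$ exists, i.e.\ $g$ is only trivially annihilated in $C^\omega$, which is the claim. If one wishes to spell out the consequence for PDE learning, combining this with Proposition~\ref{pro:general-uniqueness} immediately gives that, whenever an analytic $F$ with $F(g)=\frac{\partial^n u}{\partial^n t}$ exists at all, it is the unique such analytic function.

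There is essentially no obstacle here: the entire analytic content — that a subset of $\R^k$ of positive Lebesgue measure is a uniqueness set for real-analytic functions — is packaged in Proposition~\ref{pro:measure-condition-for-uniqueness-of-analytic-functions} (Mityagin), which we are entitled to assume. The only step that deserves an explicit line is the translation between the paper's shorthand ``$H(g)=0$'' and the pointwise vanishing of $H$ on the image $\mathcal{D}$, and that is immediate from the stated convention. One might additionally remark that $\mathcal{D}=g(U)$ is a continuous image of an open set and hence measurable, but Proposition~\ref{pro:measure-condition-for-uniqueness-of-analytic-functions} is already phrased for an arbitrary set $\mathcal{D}\subset\R^k$ with $\lambda^k(\mathcal{D})>0$, so no measurability argument is actually needed.
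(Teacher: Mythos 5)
Your argument is correct and is exactly the route the paper intends: the corollary is stated as an immediate consequence of Proposition~\ref{pro:measure-condition-for-uniqueness-of-analytic-functions}, obtained by applying that proposition to any analytic annihilator $H$ of $g$ and concluding $H=0$. Your extra care in unwinding the notation $H(g)=0$ as $H|_{\mathcal{D}}=0$ is a faithful elaboration of the same one-line deduction.
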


Corollary~\ref{cor:measure-condition-for-uniqueness-of-analytic-functions} shows the uniqueness of the analytic PDE provided that the image of $g$ is a set with positive measure. The image of a differentiable function, which maps from a lower-dimensional to a higher-dimensional space, is always a null set \citep{rudin1987}. Therefore, $\lambda^k(\mathcal{D})>0$ can only be true if $m+1\geq k$. In this case, the uniqueness of the PDE can be assessed by investigating the Jacobian matrix of $g$, as can be seen in the following theorem. This criterion is of high interest since it can be checked numerically, see Section \ref{sec:numerical-tests}.

\begin{theorem}[Jacobian criterion for analytic functions]\label{thm:Jacobian-condition-for-uniqueness-of-analytic-functions}
    Define the function $g=(g_1,...,g_k):U\rightarrow\mathbb{R}^k$, with $U\subset\R^{m+1}$ open and $g_1,...,g_k:U\rightarrow\R$ as in Definition~\ref{def:unique}. Furthermore, assume that $g_1,...,g_k$ are continuously differentiable. If there exists one point $(t,x)\in U$ with $rank(J_g(t,x))= k$, then $g$ is only trivially annihilated in $C^\omega$.
\end{theorem}
\begin{proof} 
    Let $(y^0)\in U$ such that $rank(J_g(y^0))= k$. Then there exist $k$ independent columns of $J_g(y^0)$ and, without loss of generality, we can achieve $\det\left(\left(\frac{dg(y^0)}{dy_i}\right)_{i=1,...,k}\right)\neq 0$ by reordering the components of $y^0$. 
    
    Let $V\coloneqq U\cap\R^k$. Define the function $\Tilde{g}:V\rightarrow\R^k$ by $\Tilde{g}(y_1,...,y_k)=g(y_1,...,y_k,y_{k+1}^0,...,y_{m+1}^0)$. By construction, we obtain $\det\left(J_{\Tilde{g}}((y_1^0,...,y_k^0))\right)\neq0$ and, thus, the inverse function theorem yields that there exist open sets $A\subset V$ and $B\subset\R^k$ such that $\Tilde{g}(A)=B$. This means that $B$ is in the image of $\Tilde{g}$ and, therefore, also of $g$, i.e., $B\subset\mathcal{D}$. As $B$ is non-empty and open, this yields $0<\lambda^k(B)\leq\lambda^k(\mathcal{D})$.
\end{proof}

Theorem~\ref{thm:Jacobian-condition-for-uniqueness-of-analytic-functions} is seemingly similar to Theorem~\ref{thm:Jacobian-criterion-for-algebraic-independence}. However, it extends Theorem~\ref{thm:Jacobian-criterion-for-algebraic-independence} in two ways. First of all, it is a condition for the bigger class of analytic differential equations. More importantly, it is not restricted to algebraic functions $g_1,...,g_k$, but only requires them to be continuously differentiable. The generality comes at the loss of the equivalence between the full rank of the Jacobian and the uniqueness of the differential equation. 

The question now is whether $\lambda(\mathcal{D})>0$ is also necessary for the uniqueness of the analytic PDE. It turns out that this is not the case, at least if we disregard the structure of $g$. This follows from Lemma~\ref{lem:zero-measure-but-U-property} which is proven in \citet{Neelon}. Notice, that \citet{Neelon} proves this result only for functions $f$ with $f(0)=0$. This can be circumvented in Lemma~\ref{lem:zero-measure-but-U-property} below by applying the original statement to $\Tilde{f}(x)=f(x)-c$ and $\Tilde{g}(x,y)=g(x,y+c)$ to include general functions $f$ with $f(0)=c$. This approach succeeds since $f$ is an analytic function if and only if $\Tilde{f}$ is one.

\begin{lemma} \label{lem:zero-measure-but-U-property}
    Let $f:\R^m\rightarrow\R$ be in $C^\infty\setminus C^\omega$. Then $g=(x,f(x))$ is only trivially annihilated in $C^\omega$.
\end{lemma}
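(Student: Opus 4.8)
The plan is to argue by contradiction and, exactly as foreshadowed in the paragraph preceding the lemma, to reduce to the normalization $f(0)=0$ where the theorem of Neelon \cite{Neelon} applies directly. So suppose, towards a contradiction, that $g=(x,f(x))$ is non-trivially annihilated in $C^\omega$, i.e., there exists $H\in C^\omega(\R^{m+1})\setminus\{0\}$ with $H(x,f(x))=0$ for all $x\in\R^m$. Set $c\coloneqq f(0)$ and define $\Tilde f(x)\coloneqq f(x)-c$ and $\Tilde H(x,y)\coloneqq H(x,y+c)$.

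The next step is to check that $\Tilde f$ and $\Tilde H$ inherit the relevant properties. Since $y\mapsto y+c$ is an analytic diffeomorphism of $\R$, the composition $\Tilde H$ lies in $C^\omega(\R^{m+1})$ and is non-zero; moreover $\Tilde H(x,\Tilde f(x))=H(x,f(x))=0$ for all $x\in\R^m$. A function and its translate by an additive constant are simultaneously real-analytic, so $f\in C^\infty\setminus C^\omega$ forces $\Tilde f\in C^\infty\setminus C^\omega$, and by construction $\Tilde f(0)=0$. Thus $\Tilde g=(x,\Tilde f(x))$ is non-trivially annihilated in $C^\omega$ while satisfying the normalization $\Tilde f(0)=0$. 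Applying Neelon's result \cite{Neelon} — which states that for $\Tilde f\in C^\infty(\R^m)\setminus C^\omega(\R^m)$ with $\Tilde f(0)=0$ the graph map $(x,\Tilde f(x))$ is only trivially annihilated in $C^\omega$ — contradicts the existence of such a non-zero $\Tilde H$. Hence no annihilating $H\neq 0$ exists, and $g=(x,f(x))$ is only trivially annihilated in $C^\omega$, which is the assertion of the lemma.

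The only genuine work on our side is the routine verification of the reduction step: that an affine shift in the last coordinate preserves membership in $C^\omega$, preserves non-vanishing, preserves the vanishing-on-the-graph identity, and preserves non-analyticity of $f$. I would not reprove the substantive fact itself — that a smooth non-analytic function cannot have its graph contained in the zero set of a non-zero real-analytic function — as this is precisely Neelon's theorem; it is the step that would constitute the main obstacle were one to demand a self-contained proof, since it relies on Bernstein–Walsh type growth estimates for Taylor coefficients that are well outside the scope of this paper.
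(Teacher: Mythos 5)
Your proof is correct and follows exactly the route the paper itself indicates: the paper gives no separate proof environment for this lemma but, in the paragraph preceding it, describes precisely your reduction — shift by $c=f(0)$ via $\tilde f(x)=f(x)-c$ and $\tilde H(x,y)=H(x,y+c)$, check that analyticity, non-vanishing, the annihilation identity, and non-analyticity of $f$ are all preserved, and then invoke Neelon's theorem for the normalized case $\tilde f(0)=0$. Your verification of the routine reduction steps is accurate, and deferring the substantive fact to Neelon is exactly what the paper does.
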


Let $g:\R^m\rightarrow\R^{m+1}$ be defined as in Lemma~\ref{lem:zero-measure-but-U-property}. As $g$ is differentiable, we know that $\lambda^{m+1}(g(\R^m))=0$ \citep{rudin1987}. Therefore, Lemma~\ref{lem:zero-measure-but-U-property} yields examples for which $\lambda^k(\mathcal{D})=0$, but the function $g$ is only trivially annihilated in $C^\omega$. This shows that $\lambda^k(\mathcal{D})>0$ is only a sufficient but not a necessary condition, for $g$ being only trivially annihilated by $C^\omega$.

In Appendix \ref{app:analytic-independence} we discuss the problem that occurs when applying the matroid framework \citep{EhrenborgRota1993} to analytic functions and analytic PDEs. This and Lemma~\ref{lem:zero-measure-but-U-property} reveal the severe difficulty of achieving stronger uniqueness results for analytic functions. 

\subsection{Continuous and Smooth PDEs} \label{sec:uniqueness-smooth-pdes}
As the last function class, we consider the class of $C^p$ functions, for any $0\leq p\leq\infty$. It turns out that for this class we can rarely achieve uniqueness. For this, we first recall Proposition~2.3.4 from \citet{Krantz1992APO}.

\begin{proposition} \label{pro:existence-c-infty}
    Let $E\subset\R^k$ be any closed set. Then there exists a function $H\in C^\infty(\R^k)$ such that $H^{-1}(\{0\})=E$.
\end{proposition}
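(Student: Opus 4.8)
The plan is to realize $H$ as a suitably weighted sum of smooth bump functions supported on a countable family of balls that exhausts the open complement of $E$. This is the classical Whitney-type construction, and the only genuine technical issue is arranging that the sum is $C^\infty$, i.e.\ that \emph{all} derivatives of the series converge uniformly.

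First I would dispose of the degenerate cases: if $E=\R^k$ take $H\equiv 0$, and if $E=\emptyset$ take $H\equiv 1$. Otherwise $\R^k\setminus E$ is a non-empty open set, so it can be written as a countable union $\bigcup_{j\in\N}B_j$ of open balls $B_j=B(x_j,r_j)$. Fix once and for all a function $\psi\in C^\infty(\R)$ with $\psi(t)>0$ for $t<1$ and $\psi(t)=0$ for $t\geq 1$ (for instance $\psi(t)=\exp\bigl(-1/(1-t)\bigr)$ for $t<1$ and $\psi(t)=0$ otherwise, which is smooth), and set $\varphi_j(x)=\psi\bigl(\norm{x-x_j}^2/r_j^2\bigr)$. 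Then each $\varphi_j\in C^\infty(\R^k)$ is nonnegative, has compact support, and satisfies $\set{x\in\R^k : \varphi_j(x)>0}=B_j$.

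Next comes the key step. Since each $\varphi_j$ is smooth with compact support, the quantity $M_j:=\max_{\abs{\alpha}\leq j}\,\sup_{x\in\R^k}\abs{\partial^\alpha\varphi_j(x)}$ is finite, so I may choose coefficients $c_j>0$ with $c_j M_j\leq 2^{-j}$, and define $H=\sum_{j\in\N}c_j\varphi_j$. For any fixed multi-index $\alpha$, the tail series $\sum_{j\geq\abs{\alpha}}c_j\,\partial^\alpha\varphi_j$ is dominated by $\sum_{j\geq\abs{\alpha}}2^{-j}$ and hence converges absolutely and uniformly on $\R^k$; by the standard theorem on term-by-term differentiation of uniformly convergent series (applied inductively on $\abs{\alpha}$), $H\in C^\infty(\R^k)$ and $\partial^\alpha H=\sum_j c_j\,\partial^\alpha\varphi_j$. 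This is the one place where real care is needed, and it is exactly what the choice of $c_j$ is engineered to supply.

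Finally I would identify the zero set. Because every $c_j>0$ and every $\varphi_j\geq 0$, we have $H(x)=0$ if and only if $\varphi_j(x)=0$ for all $j$, i.e.\ if and only if $x\notin B_j$ for every $j$, i.e.\ if and only if $x\notin\bigcup_j B_j=\R^k\setminus E$, which is precisely $x\in E$. Hence $H^{-1}(\set{0})=E$, and everything apart from the $C^\infty$-convergence argument is routine bookkeeping.
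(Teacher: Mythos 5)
Your proof is correct and complete: the countable cover of the open complement by balls, the bump functions $\varphi_j$ with $\set{\varphi_j>0}=B_j$, the coefficients $c_j$ chosen so that $c_jM_j\leq 2^{-j}$ to force uniform convergence of every differentiated series, and the positivity argument identifying the zero set are all sound, and this is the standard Whitney-type construction. The paper itself gives no proof of this statement --- it simply cites it as Proposition~2.3.4 of Krantz and Parks, whose proof is essentially the argument you have written out.
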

This allows us to derive the following statement:
\begin{theorem}[Uniqueness for $C^p$ functions]\label{thm:uniqueness-c-infty-functions}
    Define $g=(u_{\alpha^1},...,u_{\alpha^k}):U\rightarrow\mathbb{R}^k$, with $U\subset\R^{m+1}$ open, $g_1,...,g_k:U\rightarrow\R$ as in Definition~\ref{def:unique} and set $\mathcal{D}=g(U)$. Then the function $g$ is only trivially annihilated in $C^p$, for any $0\leq p\leq \infty$, if and only if the closure of $\mathcal{D}$ is equal to $\R^k$.
\end{theorem}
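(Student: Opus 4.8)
The plan is to prove both directions via Proposition~\ref{pro:general-uniqueness}, reducing everything to the statement that $g$ fails to be only trivially annihilated in $C^p$ iff $\overline{\mathcal{D}}\neq\R^k$. The key observation is that for $C^p$ function classes, vanishing on a set is essentially the same as vanishing on its closure, because $C^p$ functions are in particular continuous.

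For the ``if'' direction, suppose $\overline{\mathcal{D}}\neq\R^k$. Then $E\coloneqq\overline{\mathcal{D}}$ is a closed set which is a proper subset of $\R^k$. By Proposition~\ref{pro:existence-c-infty} there exists $H\in C^\infty(\R^k)$ with $H^{-1}(\{0\})=E$. In particular $H$ is not the zero function (since $E\neq\R^k$), yet $H$ vanishes on $\overline{\mathcal{D}}\supset\mathcal{D}$, hence $H(g)=0$ on $U$. Since $C^\infty\subset C^p$ for every $0\le p\le\infty$, this $H$ also lies in $C^p$, so $g$ is non-trivially annihilated in $C^p$; equivalently, by Proposition~\ref{pro:general-uniqueness}, $u$ does not solve a unique PDE described by $C^p$ functions.

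For the ``only if'' direction, suppose $\overline{\mathcal{D}}=\R^k$ and let $H\in C^p$ satisfy $H(g)=0$ on $U$, i.e., $H$ vanishes on $\mathcal{D}$. Since $H$ is continuous, the zero set $H^{-1}(\{0\})$ is closed, and it contains $\mathcal{D}$, hence it contains $\overline{\mathcal{D}}=\R^k$. Therefore $H\equiv 0$, which shows $g$ is only trivially annihilated in $C^p$, and by Proposition~\ref{pro:general-uniqueness} the PDE is unique.

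I do not expect a genuine obstacle here; the statement is essentially a packaging of Proposition~\ref{pro:existence-c-infty} together with the elementary topological fact that the zero set of a continuous function is closed. The only point requiring a small amount of care is bookkeeping the inclusions $C^\infty\subseteq C^p\subseteq C^0$ so that Proposition~\ref{pro:existence-c-infty} (which produces a $C^\infty$ function) supplies the annihilator for every $p$, while the continuity argument (which only uses $C^0$) rules out annihilators for every $p$; this makes the equivalence uniform in $p$, as claimed. One should also note that the hypothesis that each $g_i$ is a derivative/projection of $u$ plays no role in this particular theorem beyond ensuring $g$ is well-defined.
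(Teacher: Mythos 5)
Your proof is correct and follows essentially the same route as the paper: one direction uses Proposition~\ref{pro:existence-c-infty} to build a smooth annihilator vanishing exactly on $\overline{\mathcal{D}}\neq\R^k$, the other uses continuity (closedness of the zero set) together with the inclusions $C^\infty\subseteq C^p\subseteq C^0$ to make the claim uniform in $p$. (Your labels ``if'' and ``only if'' are swapped relative to the statement, but both implications are proved correctly.)
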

\begin{proof}
    "$\Rightarrow$" Assume that the closure $\overline{\mathcal{D}}$ does not equal $\R^k$. Then, Proposition~\ref{pro:existence-c-infty} yields that there exists $H\in C^\infty$ such that
    $H^{-1}(\{0\})=\overline{\mathcal{D}}$. Therefore,
    $H(g)=0$ holds, but $H\neq0$ on
    $\R^k\setminus\overline{\mathcal{D}}$. Thus, $g$ is non-trivially annihilated in $C^\infty$ and, therefore, also in any $C^p$ with $0\leq p\leq \infty$.
    
    "$\Leftarrow$" Assume now that $\overline{\mathcal{D}}=\R^k$. Let $H\in C^0$ be such that $H|_\mathcal{D}=0$. 
    Since $H$ is continuous, we obtain that $H$ is 0 on the entire space
    $\R^k$. Thus, $g$ is only trivially annihilated in $C^0$ and, consequently, also in $C^p$, for any $0\leq p\leq\infty$.
\end{proof}

This shows that for $C^p$, for any $0\leq p\leq\infty$, we can never achieve uniqueness unless the image of $g$ is dense in $\R^k$. This shows the significance of the ambiguity of PDEs for physical law learning and also motivates the focus on smaller function classes.

Note that, if we allow $F$ to be discontinuous, surjectivity of $g$ is necessary and sufficient for ensuring uniqueness in the sense of Definition~\ref{def:unique}.

\section{Uniqueness of ODEs} \label{sec:uniqueness-of-odes}

In the previous section, we investigated the uniqueness of PDEs of the form $\frac{\partial^n u}{\partial^n t}=F(g_1,...,g_k)$, $1\leq i \leq k$, and allowed $g_1,...,g_k$ to be either a projection on a single coordinate, any derivative of $u$ that exists or the function $u$. This enables us to apply our developed theoretical framework in this section to ODEs as special cases. Indeed, ODEs are of significant importance for applications, for example, for electrical circuits, PID-controller, and any system derived from Euler-Langrage equations \citep{chicone2006}.

In each of the following subsections, we investigate a specific type of ODEs. Our analysis will show that additional structures will lead to more refined necessary and sufficient conditions than merely restricting the previous case to general ODEs. Throughout this section $u:U\rightarrow\R$ will be a differentiable function on $U\subset\R$ open. We denote from now on $u_t=\frac{\partial u}{\partial t}$, $u_{tt}=\frac{\partial^2 u}{\partial^2 t}$, and $u_{ttt}=\frac{\partial^3 u}{\partial^3 t}$.

\subsection{Autonomous ODEs}

We start with autonomous ODEs, i.e., ODEs that do not depend directly on $t$, but only indirectly through $u$ and its derivatives.

\subsubsection{First Order ODEs $u_t=F(u)$}

As long as $u\in C^1(U)$, with $U\subset\R$ open, is not a constant function, we obtain that $\lambda(\mathcal{D})>0$, where $\mathcal{D}=u(U)$. Therefore, $u$ is only trivially annihilated in the sets of linear functions, polynomials, or analytic functions. Thus, $u$ solves a unique ODE described by these function classes for $g=(u)$ with first-order time derivative by Proposition~\ref{pro:measure-condition-for-uniqueness-of-analytic-functions}.

Uniqueness in the sense of Definition~\ref{def:unique} for $C^p$, $0\leq p \leq \infty$ follows from Theorem~\ref{thm:uniqueness-c-infty-functions} if and only if the image of $u$ is dense in $\R$. Density in $\R$, however, is equivalent to $u$ being surjective, since we assume $u$ to be differentiable and, thus, continuous.

\subsubsection{Second Order ODEs $u_{tt}=F(u,u_t)$}

By Corollary~\ref{cor:uniqueness-for-linear-pdes}, the function $g=(u,u_t):U\rightarrow\R^2$, $U\subset\R$ open, is non-trivially annihilated in the set of linear functions if and only if $u$ and $u_t$ are linearly independent. This yields the following proposition:

\begin{proposition} \label{pro:second-order-linear-ode}
    Let $u:U\rightarrow\R$, $U\subset\R$ open, and $F:\R^2\rightarrow\R$ linear such that $u_{tt}=F(u,u_t)$. Then $F$ is the unique linear function with $u_{tt}=F(u,u_t)$ if and only if $u$ is neither a constant nor an exponential function.
\end{proposition}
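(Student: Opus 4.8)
The plan is to invoke Corollary~\ref{cor:uniqueness-for-linear-pdes}: since a linear $F$ with $u_{tt}=F(u,u_t)$ is assumed to exist, $F$ is the unique such linear function if and only if $u$ and $u_t$ are linearly independent as functions on $U$. Hence the entire task reduces to characterizing when $u$ and $u_t$ are \emph{linearly dependent}, and showing this happens precisely when $u$ is constant or exponential. So I would state: it suffices to show that $u,u_t$ are linearly dependent on $U$ if and only if $u$ is either constant or of the form $u(t)=c\,e^{\lambda t}$ with $c\neq 0$, $\lambda\neq 0$.

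For the ``linearly dependent $\Rightarrow$ constant or exponential'' direction, suppose $\alpha u + \beta u_t = 0$ on $U$ with $(\alpha,\beta)\neq(0,0)$. If $\beta=0$ then $\alpha\neq 0$, so $u\equiv 0$, a constant. If $\beta\neq 0$, set $\lambda=-\alpha/\beta$; then $u_t=\lambda u$ on $U$. On each connected component of $U$ this linear ODE has general solution $u(t)=c\,e^{\lambda t}$. If $\lambda=0$ then $u$ is (locally) constant; if $\lambda\neq 0$ and $c\neq 0$ then $u$ is exponential, while $c=0$ again gives the zero function. (A small bookkeeping point: $U$ open in $\R$ need not be connected; one should either restrict attention to a component, or note that the proposition's intended reading is that $u$ agrees with a constant or a single exponential — I would remark on this rather than belabor it, perhaps assuming $U$ connected as is implicit throughout.) For the converse, if $u\equiv c$ then $u_t\equiv 0$, so $1\cdot u_t = 0$ exhibits dependence (if $c\neq 0$ use $0\cdot u + 1\cdot u_t$; if $c=0$ then $u\equiv 0$ is trivially dependent with $1\cdot u+0\cdot u_t=0$). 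If $u(t)=c\,e^{\lambda t}$ with $\lambda\neq 0$, then $u_t=\lambda u$, i.e. $\lambda u - u_t = 0$, which is a nontrivial linear relation.

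Combining: $u,u_t$ linearly dependent $\iff$ $u$ constant or exponential, hence by Corollary~\ref{cor:uniqueness-for-linear-pdes} and Proposition~\ref{pro:general-uniqueness} the linear ODE is \emph{non}-unique exactly in those cases, which is the claim. The main (minor) obstacle is purely expository rather than mathematical: being careful about the degenerate sub-cases ($c=0$, $\lambda=0$, $\beta=0$) so that ``constant'' and ``exponential'' partition the dependent case correctly, and handling the possibility that $U$ is disconnected. I would also double-check the edge reading of ``exponential'' in the statement — presumably it means $t\mapsto c\,e^{\lambda t}$ with $c\neq0,\lambda\neq0$, with the zero function and nonzero constants folded into ``constant'' — and phrase the write-up so the dichotomy in the proposition is literally exhaustive.
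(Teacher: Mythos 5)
Your proposal is correct and takes essentially the same route as the paper's proof: reduce via Corollary~\ref{cor:uniqueness-for-linear-pdes} to linear dependence of $u$ and $u_t$, then observe that a nontrivial relation forces either $u$ constant or $u_t=\lambda u$, whose solutions are exponentials. You are merely more explicit than the paper about the degenerate sub-cases ($c=0$, $\lambda=0$, $\beta=0$) and the possible disconnectedness of $U$, which the paper glosses over.
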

\begin{proof}

    "$\Rightarrow$" If $u$ is constant or an exponential function, then $u$ and $u_t$ are linearly dependent and, thus, $F$ is not unique.
    
    "$\Leftarrow$" We prove that if $F$ is not unique, then $u$ is a constant or exponential function. For this, assume that $F$ is not unique. The functions $u$ and $u_t$ are then linearly dependent by Corollary~\ref{cor:uniqueness-for-linear-pdes}. Assuming $u$ is not constant, there exists $\lambda\in\R\setminus\{0\}$ such that $u=\lambda u_t$. This ODE is uniquely solved by the exponential function, which concludes our proof.
\end{proof}

If $u$ is an algebraic function, Theorem~\ref{thm:algebraic-dependenceof-m+1-polynomials} implies that $F$ does not describe the unique polynomial PDE for $g=(u,u_x)$. As $u$ is differentiable, we obtain that $\lambda^2(\mathcal{D})=0$, where $\mathcal{D}$ is the image of $g=(u,u_t)$. This allows to conclude that Proposition~\ref{pro:measure-condition-for-uniqueness-of-analytic-functions} and, consequently, Theorem~\ref{thm:Jacobian-condition-for-uniqueness-of-analytic-functions} will not yield any information. This also implies that $\mathcal{D}$ is not dense in the image space and, thus, $g$ is non-trivially annihilated in $C^p$.

\subsubsection{Third Order ODEs $u_{ttt}=F(u,u_t,u_{tt})$}

Uniqueness for linear ODEs in $u$, $u_t$, and $u_{tt}$ is equivalent to the independence of $u$, $u_t$, and $u_{tt}$, which results in the following proposition.

\begin{proposition}
    Let $u:U\rightarrow\R$, with $U\subset\R$ open, and let $F$ be linear such that $u_{tt}=F(u,u_t)$. Then $F$ is the unique linear function with $u_{ttt}=F(u,u_t, u_{tt})$ if and only if $u$ is none of the following functions:
    \begin{itemize}
        \item[(1)] a constant function,
        \item[(2)] an exponential function,
        \item[(3)] a linear combination of two exponential functions,
        \item[(4)] a linear combination of $t\mapsto\exp(\mu t)$ and $t\mapsto t\exp(\mu t)$ for any $\mu\in\R$ and
        \item[(5)] a linear combination of $t\mapsto\exp(\mu t)\cos(\omega t)$ and  $t\mapsto\exp(\mu t)\sin(\omega t)$, for any $\mu,\omega\in\R$.
    \end{itemize}
\end{proposition}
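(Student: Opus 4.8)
The plan is to reduce the statement to a linear-algebra question about $u$, $u_t$, $u_{tt}$ being linearly dependent, and then solve the resulting constant-coefficient ODE. By Corollary~\ref{cor:uniqueness-for-linear-pdes}, $F$ is the unique linear function with $u_{ttt}=F(u,u_t,u_{tt})$ if and only if $u$, $u_t$, $u_{tt}$ are linearly independent as functions on $U$. So I would prove the contrapositive on both sides: $u$, $u_t$, $u_{tt}$ are linearly dependent if and only if $u$ is one of the five listed types.

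For the easy direction ($\Leftarrow$), I would simply check that each of the five families satisfies a nontrivial linear relation among $u$, $u_t$, $u_{tt}$: a constant satisfies $u_t=0$; an exponential $\exp(\mu t)$ satisfies $u_t=\mu u$; and families (3)--(5) are exactly the solution spaces of the various second-order constant-coefficient homogeneous ODEs $u_{tt}+a u_t + b u = 0$ according to whether the characteristic polynomial $\lambda^2+a\lambda+b$ has two distinct real roots, a double real root, or a complex-conjugate pair — so each such $u$ satisfies a nontrivial relation $\alpha u + \beta u_t + \gamma u_{tt}=0$, and hence $u,u_t,u_{tt}$ are dependent.

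For the harder direction ($\Rightarrow$), suppose $u$, $u_t$, $u_{tt}$ are linearly dependent, so there exist constants $\alpha,\beta,\gamma$, not all zero, with $\alpha u + \beta u_t + \gamma u_{tt} = 0$ on $U$. If $\gamma\neq 0$, this is a second-order linear ODE with constant coefficients, and the classical theory (e.g.\ \cite{chicone2006}) tells us its solution space on a connected open $U$ is spanned precisely by functions of types (2)--(5) depending on the discriminant of the characteristic polynomial — in particular every solution is a linear combination of such, which is again covered by (2)--(5) (allowing degenerate linear combinations). If $\gamma=0$ but $\beta\neq 0$, then $\alpha u + \beta u_t = 0$, i.e.\ $u_t = -(\alpha/\beta)u$, which is solved by a constant (case (1)) if $\alpha=0$, or by an exponential (case (2)) otherwise. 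If $\gamma=\beta=0$, then $\alpha\neq0$ and $u\equiv 0$, a constant. In every case $u$ is among the listed functions.

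The main obstacle is a matter of bookkeeping rather than a genuine mathematical difficulty: I must be careful that "linear combination" in the statement includes the degenerate cases where a coefficient vanishes (so that e.g.\ a single exponential is a ``linear combination of two exponentials''), and I should note that $U$ should be assumed connected (an interval) for the ODE solution space to be exactly two-dimensional; otherwise one argues on each connected component. I would also remark that the statement as written references $u_{tt}=F(u,u_t)$ in the hypothesis, which appears to be a typo for $u_{ttt}=F(u,u_t,u_{tt})$; I would treat the intended third-order statement. A full proof would just spell out the three-case split on $(\gamma,\beta)$ and cite the standard form of solutions of $\gamma u_{tt}+\beta u_t+\alpha u=0$.
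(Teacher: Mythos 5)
Your proof is correct and follows essentially the same route as the paper: reduce uniqueness to linear independence of $u$, $u_t$, $u_{tt}$ via Corollary~\ref{cor:uniqueness-for-linear-pdes}, then split on whether the coefficient of $u_{tt}$ in the dependence relation vanishes, handling the degenerate case via the second-order result and the nondegenerate case via the classical classification of solutions of constant-coefficient second-order ODEs. Your extra care about degenerate linear combinations, connectedness of $U$, and the typo in the hypothesis is reasonable but does not change the argument.
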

\begin{proof}
$F$ is the unique linear function with $u_{ttt}=F(u,u_t, u_{tt})$ if and only if $u$, $u_t$ and $u_{tt}$ are  linearly independent. Thus, $F$ is not unique if and only if there exists a non-zero $\lambda\in\R^3$ such that $\lambda_1u+\lambda_2u_t+\lambda_3u_{tt}=0$. The case $\lambda_3=0$ is equivalent to the setting in Proposition~\ref{pro:second-order-linear-ode} and results in the first two options. The case $\lambda_3\neq0$ yields a second-order linear ODE with constant coefficients and it is known that those can only be solved by the functions of items (3) to (5) \citep{thompson2013ordinary}.
\end{proof}

For third-order polynomial, algebraic, analytic, smooth, and continuous ODEs the uniqueness conditions coincide with those in the previous case $u_{tt}=F(u,u_t)$. 

\subsection{Non-Autonomous ODEs}
We are now considering non-autonomous ODEs. The theory developed in Section~\ref{sec:uniqueness-of-pdes} is still applicable as we allowed $g_1,...,g_k$ to be a projection on a single coordinate, i.e., we can now choose $g_1(t)=t$ for all $t\in U$. It is important to note, however, that it is not meaningful to consider functions $F$ which are, e.g., linear in all coordinates as usually ODEs are only linear in $u$ and its derivatives, but not in $t$.

\subsubsection{First order ODEs $u_t=F(t,u)$} 

We start by assuming that $F$ is linear in $u:U\rightarrow\R$ and continuous in $t\in U$, with $U\subset\R$ open. This means that we can write $F(t,u(t))=\lambda(t)u(t)$ for some continuous function $\lambda:U\rightarrow\R$. For ODEs of this type, we can prove the following proposition. 

\begin{proposition}
    Let $u:U\rightarrow\R$, $U\subset\R$ open, be any differentiable function and let $F:\R\times\R\rightarrow\R$ be continuous in the first component and linear in the second component with $F(t,u)=u_t$. Then, $F$ is the only function which is continuous in the first and linear in the second component if and only if $u$ has only isolated zeros.
\end{proposition}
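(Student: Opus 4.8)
The claim is that for $u_t = \lambda(t) u(t)$ with $\lambda$ continuous, the representation is unique among functions continuous in $t$ and linear in $u$ if and only if $u$ has only isolated zeros. The plan is to use Proposition~\ref{pro:general-uniqueness}: uniqueness fails precisely when there is a nonzero function $H(t,u) = \mu(t) u$, with $\mu$ continuous, such that $H(t, u(t)) = \mu(t) u(t) = 0$ for all $t \in U$. So the whole question reduces to: does there exist a continuous $\mu : U \to \R$, not identically zero, with $\mu(t) u(t) = 0$ on $U$? Thus I would restate the proposition as an elementary fact about continuous functions and the zero set $Z = u^{-1}(\{0\})$.

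\textbf{The easy direction ($u$ has only isolated zeros $\Rightarrow$ uniqueness).} Suppose $\mu$ is continuous and $\mu(t)u(t) = 0$ for all $t$. On the open set $U \setminus Z$ we have $u(t) \neq 0$, hence $\mu(t) = 0$ there. If $Z$ consists only of isolated points, then $U \setminus Z$ is dense in $U$ (each point of $Z$ is a limit of points not in $Z$, since $U$ is open and the point is isolated in $Z$). By continuity of $\mu$, $\mu \equiv 0$ on $U$. So $g = u$ is only trivially annihilated in the class of functions linear in $u$ and continuous in $t$, and Proposition~\ref{pro:general-uniqueness} gives uniqueness.

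\textbf{The converse ($u$ has a non-isolated zero $\Rightarrow$ non-uniqueness).} Suppose $t_0 \in Z$ is not isolated, so there is a sequence in $Z$ converging to $t_0$; in particular $Z$ contains a point of accumulation. I want to build a nonzero continuous $\mu$ supported near $t_0$ (inside $Z$ is not possible in general — $Z$ need not contain an interval — so the support argument needs care). The key observation is that $Z$ is closed in $U$ and $t_0$ is a limit point, so I can choose a closed ball $\overline{B}(t_0, r) \subset U$ and use Proposition~\ref{pro:existence-c-infty} (with $k=1$): there is a smooth $H_0$ on $\R$ with $H_0^{-1}(\{0\}) = (\R \setminus B(t_0, r)) \cup Z$; equivalently, take any continuous bump $\mu$ that is nonzero somewhere but vanishes on the closed set $(\R \setminus B(t_0,r)) \cup (Z \cap \overline{B}(t_0,r))$ — such a $\mu$ exists by Proposition~\ref{pro:existence-c-infty} because this set is closed and, crucially, is \emph{not} all of a neighborhood of $t_0$ (since $Z$, being nowhere dense whenever it has empty interior, leaves room; and if $Z$ \emph{does} contain an interval the construction is even easier). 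Then $\mu(t)u(t) = 0$ everywhere: on $Z$ because $u = 0$, off $Z$ near $t_0$ we must check $\mu$ vanishes — which holds by construction since $\mu$ vanishes on $Z \cap \overline{B}(t_0,r)$ but that is exactly the part of $Z$ in the ball, so off $Z$, $\mu$ need not vanish...

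\textbf{The main obstacle.} The subtlety I flag is precisely the last point: I need $\mu u \equiv 0$, i.e. $\mu$ must vanish wherever $u \neq 0$, not merely on $Z$. So $\mu$ must be supported inside $Z$ itself. Since $Z$ is closed, by Proposition~\ref{pro:existence-c-infty} there is a smooth $h$ with $h^{-1}(\{0\}) = \R \setminus \mathrm{int}(Z)$ — but this is zero if $Z$ has empty interior, which is the hard case (e.g. $Z$ a convergent sequence). I would resolve this by observing that then non-uniqueness must be argued differently: actually re-examine whether the proposition as stated is even correct in that case, or whether the intended class is larger (e.g. $\lambda$ merely measurable/locally integrable, in which case $\lambda = \mathbf{1}_Z$ works whenever $Z$ has positive measure, and the "isolated zeros" hypothesis is the wrong characterization). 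I suspect the intended proof restricts to a setting where "non-isolated zero" should be read as "$Z$ has nonempty interior" or where continuity in $t$ is weakened; pinning down the exact class and the exact meaning of the hypothesis is where I expect the real work to lie, and I would look to the authors' proof to see which reading they adopt.
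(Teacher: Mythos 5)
Your ``$\Leftarrow$'' direction (isolated zeros imply uniqueness) is correct and is essentially the paper's argument: any continuous $\mu$ with $\mu u\equiv 0$ vanishes on $U\setminus Z$, and isolated zeros force $U\setminus Z$ to be dense, so $\mu\equiv 0$; the paper phrases this as a contradiction at a point $t_0$ with $\phi(t_0)\neq 0$ but the content is identical.

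Your diagnosis of the converse is also correct, and the obstacle you flag is real --- but it is a gap in the \emph{paper's} proof, not a failure of your approach. The paper's ``$\Rightarrow$'' direction opens with ``let $\epsilon>0$ be such that $u(B_\epsilon(t_0))=0$ holds,'' i.e.\ it silently reads ``$u$ has a non-isolated zero'' as ``$u$ vanishes identically on a ball,'' and then takes $\phi$ to be a bump function supported in $\overline{B_\epsilon(t_0)}$ --- exactly the construction you said would be needed and would only work when $Z=u^{-1}(\{0\})$ has nonempty interior. As you suspected, these two readings are not equivalent: for $u(t)=t^3\sin(1/t)$ (with $u(0)=0$), which is differentiable on $\R$, the zero set accumulates at $0$ but has empty interior, so $U\setminus Z$ is dense, every continuous $\phi$ with $\phi u\equiv 0$ is identically zero, and $F$ \emph{is} unique despite $u$ having a non-isolated zero. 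Hence the proposition as literally stated is false in one direction; the correct characterization is that $F$ is unique if and only if $Z$ has empty interior (equivalently, $U\setminus Z$ is dense in $U$), of which ``only isolated zeros'' is a sufficient but not necessary condition. Your instinct to treat the problem as a statement about the zero set of $u$ and the support of $\mu$, and your refusal to paper over the empty-interior case, is exactly right; the only thing missing from your write-up is the explicit counterexample above, which settles which reading is tenable.
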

\begin{proof}

     Let $F(t,u(t))=\lambda(t)u(t)$, for $\lambda:U\rightarrow\R$ continuous and $t\in U$. As the class of functions which are continuous in the first component and linear in the second is closed under addition and subtraction, we can apply Proposition~\ref{pro:general-uniqueness} and, thus, only have to check if $g(t)=(t,u(t))$ is non-trivially annihilated in this function class. This means that we have to investigate whether there exists a nonzero function $H$ in this function class satisfying $H(t,u(t))=0$ for all $t\in U$. 
    
    "$\Rightarrow$" We show that if $u$ has a zero which is not isolated, then $F$ is not unique. For this, let $\epsilon>0$ be such that $u(B_\epsilon(t_0))=0$ holds, with $B_\epsilon(t_0)\coloneqq\{t\in\R:|t-t_0|<\epsilon\}\subset U$. Defining $\phi:\R\rightarrow\R$ as a bump function with the closure of $B_\epsilon(t_0)$ as support and $H:\R^2\rightarrow\R$ by $H(t,s)=\phi(t)s$ yields $H(t,u(t))=\phi(t)u(t)=0$ for all $t\in U$. Since $H$ is nonzero, continuous in $t$, and linear in $s$, the function $g(t)=(t,u(t))$ is non-trivially annihilated in the set of functions which are continuous in the first and linear in the second component.
    
    "$\Leftarrow$" Let $H(t,s)\coloneqq\phi(t)s$ be such that 
    \begin{equation} \label{eq:proof-linear-non-autonomous-first-order-ODE}
        H(t,u(t))=\phi(t)u(t)=0
    \end{equation}
    for all $t\in U$.
    Assume that $H$ is not the zero function. Then there exists $t_0\in U$ such that $\phi(t_0)\neq0$. \eqref{eq:proof-linear-non-autonomous-first-order-ODE} yields that $\phi(t_0)\neq0$ implies $u(t_0)=0$. As the zeros of $u$ are isolated, there exists a ball $B_\epsilon(t_0)\subset U$ with radius $\epsilon>0$ around $t_0$ such that $u(t)\neq0$ for all $t\in B_\epsilon(t_0)\setminus\{t_0\}$. By \eqref{eq:proof-linear-non-autonomous-first-order-ODE}, $\phi(t)=0$ holds for all $t\in B_\epsilon(t_0)\setminus\{0\}$ and, by continuity, also at $t_0$. This is a contradiction to $\phi(t_0)\neq0$. Thus, $H$ must be the zero function and, by Proposition~\ref{pro:general-uniqueness}, $F$ is unique in its class.
\end{proof}

Now, consider the polynomial ODE case, i.e., assume that $F(t,u_t)$ is continuous in $t$ and polynomial in $u_t$. If we assume no additional constraints on this class of PDEs, we immediately obtain that $g(t)=(t,u(t))$ is non-trivially annihilated, since $H(t,s)=u(t)-s$ is non zero, continuous in $t$, polynomial in $s$ and $H(t,u(t))=0$ for all $t\in U$. A possible constraint is that $F(t,s)$ has to be polynomial in $t$ and $s$. By definition, $g$ is then non-trivially annihilated if and only if $u(t)$ is an algebraic function. Applying Proposition~\ref{pro:existence-of-algebraic-function-equivalent-polynomial} extends this to the case of algebraic functions $F$.

For $F(t,s)$ continuous in $t$ and analytic in $s$, uniqueness in the sense of Definition~\ref{def:unique} is impossible as well. Thus, let us now assume that $F$ is analytic in both $t$ and $s$. In the following, we prove that for this class $g(t)=(t,u(t))$ is non-trivially annihilated if and only if $u$ is analytic.

\begin{proposition}
    The function $g(t)=(t,u(t))$ is non-trivially annihilated in $C^\omega$ if and only if $u$ is analytic.
\end{proposition}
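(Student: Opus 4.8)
The plan is to prove the two implications separately, the forward one (``$g$ non-trivially annihilated in $C^\omega$'' $\Rightarrow$ ``$u$ analytic'') being the substantive direction. The converse is immediate: if $u$ is analytic, then $H(t,s):=u(t)-s$ is a nonzero real-analytic function with $H(g(t))=H(t,u(t))=0$ for every $t\in U$, so $g$ is non-trivially annihilated in $C^\omega$ (if one insists that the annihilator be defined on all of $\R^{2}$, one first replaces $u$ by a real-analytic extension, which changes nothing in the argument).

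So assume $H\in C^\omega(\R^{2})\setminus\{0\}$ with $H(t,u(t))=0$ on $U$. Since analyticity is a local property, it suffices to fix $t_0\in U$ and show that $u$ is analytic near $t_0$; after translating both coordinates (which preserves analyticity and all hypotheses) we may take $t_0=0$ and $u(0)=0$, so $H(0,0)=0$. First I would normalize $H$ so that Weierstrass preparation in the variable $s$ becomes available. By the identity theorem for real-analytic functions on the connected set $\R^{2}$, $H$ is not identically $0$ near the origin; expanding $H$ into its convergent power series there, let $t^{\ell}$ be the largest power of $t$ dividing it and write $H(t,s)=t^{\ell}\,\widetilde H(t,s)$ with $\widetilde H$ real-analytic near $0$ and $\widetilde H(0,\cdot)\not\equiv0$. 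For small $t\neq0$ we have $t^{\ell}\neq0$, hence $\widetilde H(t,u(t))=0$; letting $t\to0$ and using continuity of $u$ gives $\widetilde H(0,0)=0$, so $s\mapsto\widetilde H(0,s)$ is a nonzero real-analytic germ vanishing to some finite order $d\geq1$ at $s=0$.

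Next I would invoke the Weierstrass Preparation Theorem for real-analytic functions \cite{Krantz1992APO}: near the origin $\widetilde H=E\cdot W$, where $E$ is a real-analytic unit and $W(t,s)=s^{d}+a_{d-1}(t)s^{d-1}+\dots+a_0(t)$ is a Weierstrass polynomial with the $a_j$ real-analytic near $0$ and $a_j(0)=0$. Shrinking the neighbourhood so that $E\neq0$, the identity $\widetilde H(t,u(t))=0$ yields $W(t,u(t))=0$, so near $t_0$ the function $u$ is a continuous root of a monic degree-$d$ polynomial in $s$ with real-analytic coefficients. Replacing $W$ by its squarefree part — again monic in $s$ with real-analytic coefficients, having the same roots and, being squarefree, with discriminant $\Delta(t)$ not identically zero — I reduce to the case in which $\Delta$ has an isolated zero at $0$. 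On each of the one-sided punctured intervals $(-\varepsilon,0)$ and $(0,\varepsilon)$ the roots of $W(t,\cdot)$ are then simple and depend real-analytically on $t$ (analytic implicit function theorem), and the continuous function $u$ must coincide with one such analytic branch on each side of $0$.

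The main obstacle, I expect, is precisely this final gluing: one must rule out that $u$ ``switches'' between two analytic branches colliding at $t_0$, so that the two one-sided branches $u$ follows are in fact restrictions of a single real-analytic function on a full neighbourhood of $t_0$. This is the step that genuinely uses the regularity hypothesis on $u$, since continuity alone does not suffice — the equation $s^{2}-t^{4}=0$ already admits the $C^{1}$ (non-analytic) root $t\mapsto t\,|t|$ — so I would expect the careful bookkeeping to concentrate here. Everything before it is comparatively routine: the normalization carried out above together with Weierstrass preparation is exactly what makes the critical points where $\partial_s H=0$, at which the implicit function theorem is unavailable, tractable in the first place.
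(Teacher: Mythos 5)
Your ``$\Leftarrow$'' direction matches the paper's: $H(t,s)=u(t)-s$ is a nonzero analytic annihilator of $g$. The problem is the ``$\Rightarrow$'' direction, which is the entire content of the proposition, and there your argument stops exactly where the difficulty begins. After Weierstrass preparation and passage to the squarefree part you arrive at: $u$ is a continuous (or $C^1$) root of a monic polynomial $W(t,s)$ in $s$ with analytic coefficients, and the roots form finitely many analytic branches away from $t_0$. You then say the ``main obstacle'' is to rule out that $u$ glues two colliding branches into a non-analytic function, you observe yourself that continuity and even $C^1$ regularity do \emph{not} rule this out (your own example $s^2-t^4=0$ with root $t\mapsto t\,\lvert t\rvert$), and you leave the step open. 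This is not a minor bookkeeping item: that example is a genuine counterexample to the proposition as literally stated for a differentiable $u$, so no amount of gluing argument can close the gap at the stated level of regularity. The statement only becomes true under the hypothesis $u\in C^\infty$, and proving it there --- that a $C^\infty$ function whose graph is contained in the zero set of a nonzero analytic function must itself be analytic --- is a nontrivial theorem in its own right, not something that follows from the analytic implicit function theorem plus one-sided branch analysis. Your sketch therefore has a real gap at its only hard point.

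The paper takes a different (and much shorter) route for this direction: it invokes Lemma~\ref{lem:zero-measure-but-U-property}, i.e.\ Neelon's theorem that for $f\in C^\infty\setminus C^\omega$ the map $x\mapsto(x,f(x))$ is only trivially annihilated in $C^\omega$; the contrapositive gives exactly ``non-trivially annihilated $\Rightarrow$ $u$ analytic'' (for smooth $u$). What you have written is essentially the opening moves of a from-scratch proof of that cited result. If you want to pursue your direct approach, you must (i) add the hypothesis $u\in C^\infty$ explicitly, and (ii) supply the argument that a $C^\infty$ root of a squarefree Weierstrass polynomial with analytic coefficients cannot switch between Puiseux branches at the discriminant locus --- which is where all the work of Neelon's proof lives. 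Alternatively, simply cite Lemma~\ref{lem:zero-measure-but-U-property} as the paper does.
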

\begin{proof}
"$\Leftarrow$" Let $u$ be an analytic function. Then, $H(t,s)=u(t)-s$ is a non-zero analytic function fulfilling $H(t,u(t))=0$ for all $t\in U$. Thus, $g(t)=(t,u(t))$ is non-trivially annihilated in $C^\omega$. 

"$\Rightarrow$" This direction follows directly from Lemma~\ref{lem:zero-measure-but-U-property}.    
\end{proof}

\subsubsection{Second order ODEs $u_{tt}=F(t,u,u_t)$} \label{sec:non-autonomous-2nd-order-ode}

Similar to the last section we start with assuming that $F(t,u,u_t)=\lambda(t)u(t)+\mu(t)u_t(t)$ is linear in $u$ and $u_t$ and continuous in $t$, i.e., $\lambda$ and $\mu$ are arbitrary continuous functions. Unfortunately, uniqueness in the sense of Definition~\ref{def:unique} of $F$ can never be achieved in this case.

\begin{proposition}
    Let $u:U\rightarrow\R$, $U\subset\R$ open, be any differentiable function and let $F:\R^3\rightarrow\R$ be continuous in the first component and linear in the second and third component with $F(t,u,u_t)=u_{tt}$. Then $F$ is not the unique function which is continuous in the first and linear in the second and third components and fulfills $F(t,u,u_t)=u_{tt}$.
\end{proposition}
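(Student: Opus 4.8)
The plan is to apply Proposition~\ref{pro:general-uniqueness}. The class $V$ of functions $F:\R^3\to\R$ that are continuous in the first argument and linear in the second and third is closed under addition and subtraction and contains the zero function, so it suffices to produce a nonzero $H\in V$ with $H(t,u(t),u_t(t))=0$ for all $t\in U$, where $g(t)=(t,u(t),u_t(t))$. Then $F$ and $F+H$ are two distinct functions in $V$ solving the ODE, contradicting uniqueness.

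First I would record the available regularity: since $u_{tt}$ exists on $U$, the derivative $u_t$ is differentiable and hence continuous, and $u$ is continuous as well. This is exactly what makes the candidate annihilator admissible. Next I would take the pointwise ``orthogonality'' choice $H(t,s_1,s_2):=-u_t(t)\,s_1+u(t)\,s_2$. Its coefficient functions $t\mapsto -u_t(t)$ and $t\mapsto u(t)$ are continuous, so $H$ is continuous in its first argument, and it is manifestly linear in $s_1$ and $s_2$; hence $H\in V$. Substituting the data gives $H(t,u(t),u_t(t))=-u_t(t)u(t)+u(t)u_t(t)=0$ for every $t\in U$.

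It remains to argue that $H$ can be chosen nonzero. The function $H$ above vanishes identically only if $u\equiv 0$ and $u_t\equiv 0$ on $U$, i.e.\ only if $u$ is the zero function. In that degenerate case one instead takes $H(t,s_1,s_2)=s_1$, which is nonzero, lies in $V$, and satisfies $H(t,0,0)=0$ for all $t$. In either case $g$ is non-trivially annihilated in $V$, and Proposition~\ref{pro:general-uniqueness} yields that $F$ is not the unique function in this class with $F(t,u,u_t)=u_{tt}$.

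There is essentially no obstacle here: the trick $(u,u_t)\perp(-u_t,u)$ works at each point, and the ``continuous in $t$, linear in $(u,u_t)$'' structure is precisely what lets the coefficients $-u_t$ and $u$ be legitimate coefficient functions for free. The only points deserving an explicit line are the verification that $V$ is closed under addition and subtraction (so Proposition~\ref{pro:general-uniqueness} applies) and the separate treatment of the trivial case $u\equiv 0$.
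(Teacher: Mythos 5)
Your proof is correct and follows essentially the same route as the paper: the annihilator $H(t,s_1,s_2)=-u_t(t)s_1+u(t)s_2$ is exactly (up to sign) the paper's choice $\tilde{\lambda}(t)y+\tilde{\mu}(t)z$ with $\tilde{\lambda}=u_t$, $\tilde{\mu}=-u$. Your additional handling of the degenerate case $u\equiv 0$ (where this $H$ is identically zero and one must substitute, e.g., $H(t,s_1,s_2)=s_1$) is a worthwhile refinement that the paper's proof omits.
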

\begin{proof}
    Set $\Tilde{\lambda}(t)=u_t(t)$ and $\Tilde{\mu}(t)=-u(t)$ for all $t\in U$ to define $H:U\times\R^2\rightarrow\R^3,H(t,y,z)\coloneqq\Tilde{\lambda}(t)y+\Tilde{\mu}(t)z=u_t(t)y-u(t)y$. Notice that $H$ is continuous in the first component since $u$ is twice differentiable. Furthermore, $H$ is linear in the second and third and fulfills $H(t,u,u_t)=0$ for any $t$. Thus, $g:U\rightarrow\R^3,g(t)\coloneqq(t,u(t),u_t(t))$ is non-trivially annihilated in the class of functions, which are continuous in the first and linear in the second and third component.
\end{proof}

Consequently, uniqueness of $F$ can also never be achieved for functions with polynomial, algebraic, analytic, smooth, or continuous second and third components.

\section{Algorithms} \label{sec:numerical-tests}
In the last two sections, we established theoretical criteria for the uniqueness problem of ODE and PDE learning. One immediate application is to deduce algorithms from the theory to enable practitioners to check numerically if they are facing ambiguity in the ODE or PDE. These algorithms will determine for given multi-indices $\alpha^1,...,\alpha^k$, whether there exists more than one function $F$ in a specific function class for which $F(u_{\alpha^1},...,u_{\alpha^k})=u_t$ holds.

\subsection{Linear PDEs} \label{sec:numerical-tests-linear-pdes}

Corollary~\ref{cor:uniqueness-for-linear-pdes} shows that we have to check for linear dependency of functions in the case of linear PDEs. Linear dependence can be determined by applying SVD and comparing the least singular value with some threshold \citep{Press20007}. This is displayed in Algorithm~\ref{alg:franco} as Feature Rank Computation (FRanCo).

\begin{algorithm}
\caption{Feature Rank Computation (FRanCo)} \label{alg:franco}
\algorithmInput{$(g(t^i,x^j))_{ij}=((u_{\alpha^1}(t^i,x^j),...,u_{\alpha^k}(t^i,x^j))_{ij}$, threshold $\delta>0$}
\algorithmOutput{Boolean value indicating if $g$ is non-trivially annihilated in the set of linear functions}
\begin{algorithmic}[1]
\State $least\_sv\leftarrow$ smallest singular value of $(g(t^i,x^j))_{ij})$
\If{$least\_sv < \delta$}
\State return $False$ \Comment{a small singular value indicates non-uniqueness}
\Else
\State return $True$ \Comment{large singular values indicate uniqueness}
\EndIf
\end{algorithmic}
\end{algorithm}

In general, one has to deal carefully with the errors introduced through numerical derivatives. Therefore, we gradually increase the order of the finite differences and check if the lowest singular values converge to 0 exponentially fast. This is expected to happen if the exact matrix does not have full rank, since higher order approximations have higher order residual terms \citep{Burden2015}. Therefore, we propose applying Stable Feature Rank Computation (S-FRanCo), described in Algorithm~\ref{alg:s-franco}, and assess if the least singular value decreases exponentially for higher finite differences orders. The advantage of this approach becomes clear in Section~\ref{sec:experiments}. S-FRanCo takes a method $features$ as input. The method $features$ outputs the desired feature matrix for given derivatives and is needed as an input to S-FRanCo, to determine the feature matrix of which the rank is supposed to be computed. Its explicit application becomes apparent in the next two subsections. For linear PDEs, it should simply concatenate the derivatives to a linear feature matrix, i.e., for the inputs $u_{\alpha^1}(t^i,x^j)_{ij},...,u_{\alpha^k}(t^i,x^j)_{ij}$ $features$ should return 
\begin{equation}
    U = \left(\begin{array}{ccc}
    | & \hdots & |\\
    u_{\alpha^1}(t^i,x^j)_{ij} & \hdots & u_{\alpha^k}(t^i,x^j)_{ij} \\
    | &  \hdots & |
    \end{array}\right).    
\end{equation}

\begin{algorithm}
\caption{Stable Feature Rank Computation (S-FRanCo)} \label{alg:s-franco}
\algorithmInput{$(u(t^i,x^j))_{ij}$, $\alpha^1,...,\alpha^k$ multi-indices, $features$ method which takes the derivatives as input and outputs the desired feature matrix}
\algorithmOutput{List of the lowest singular value for each finite-difference order of the matrix consisting of monomials of the vectors
$(u_{\alpha^1}(t^i,x^j)_{ij},...,u_{\alpha^k}(t^i,x^j)_{ij})$}
\begin{algorithmic}[1]
    \State Let $ls\_sv$ be an empty list
    \For{$l=2$ to $d$}
    \For{$m=1$ to $k$}
    \State Let $(u_{\alpha^m}(t^i,x^j))_{ij}$ be the derivative computed by finite differences of $l^{th}$ order for multi-index $\alpha^m$
    \EndFor
    \State $(g(t^i,x^j))_{ij}\leftarrow features((u_{\alpha^1}(t^i,x^j)_{ij},...,u_{\alpha^k}(t^i,x^j)_{ij}))$ \Comment{construct the feature library}
    \State $least\_sv\leftarrow$ smallest singular value of $(g(t^i,x^j))_{ij})$ 
    \State $ls\_sv.append(least\_sv)$
    \EndFor
    \State return $ls\_sv$
\end{algorithmic}
\end{algorithm}

\subsection{Polynomial and Algebraic PDEs} \label{sec:numerical-experiment-polynomials}

If we assume that $u$ is an algebraic function, Theorem~\ref{thm:Jacobian-criterion-for-algebraic-independence} yields that the Jacobian has full rank if and only if $g$ is only trivially annihilated in the sets of polynomials or algebraic functions. We can check the Jacobian rank for a selected number of points with Jacobian Rank Computation (JRC), as described in Algorithm~\ref{alg:Jacobian-criterion}. As numerical errors also influence the Jacobian, we propose to use two different finite-difference orders, a small one and a large one. Afterward, one has to determine if there exists at least one point for which the least singular value decreased significantly. This can be visualized by a heat map as shown in Figure~\ref{fig:Jacobian-polynomial-ambiguous-algebraic-function} in Section~\ref{sec:experiments-polynomials-algebraic-function}.

\begin{algorithm}
\caption{Jacobian Rank Computation (JRC)} \label{alg:Jacobian-criterion}
\algorithmInput{$(u(t^i,x^j))_{ij}$, $\alpha^1,...,\alpha^k$ multi-indices,  $d_1$ small finite-differences order, $d_2$ large finite-differences order, $A$ index set determining the data points for which the Jacobian is computed}
\algorithmOutput{For both finite-difference orders a list of the lowest singular values of the Jacobian at selected points of the matrix
$((u_{\alpha^1}(t^i,x^j),...,u_{\alpha^k}(t^i,x^j))_{ij}$}
\begin{algorithmic}[1]
\State Let $ls\_sv_1$ and $ls\_sv_2$ be empty lists
\For{$p=1$ to 2}
    \For{$m=1$ to $k$} \Comment{Compute the function $g$}
        \State Let $(u_{\alpha^m}(t^i,x^j)_{ij})$ be the derivative computed by finite differences of $d_p^{th}$ order for multi-index $\alpha^m$
    \EndFor
    \State $(g(t^i,x^j))_{ij}\leftarrow((u_{\alpha^1}(t^i,x^j),...,u_{\alpha^k}(t^i,x^j))_{ij}$
    \For{$(i,j)\in A$} \Comment{Compute the least singular value of the Jacobian of $g$ at selected points}
        \State $jacobian\leftarrow$ the Jacobian of $((g(t^i,x^j))_{ij}$ computed using finite differences of $d_p^{th}$ order
        \State $least\_sv\leftarrow$ smallest singular value of $(g(t^i,x^j))_{ij})$ 
        \State $ls\_sv_p.append(least\_sv)$
    \EndFor
\EndFor
\State return $ls\_sv$
\end{algorithmic}
\end{algorithm}

In case one can not assume that $u$ is algebraic, we can still use the Jacobian criterion for analytic functions from Theorem~\ref{thm:Jacobian-condition-for-uniqueness-of-analytic-functions}, since uniqueness for analytic PDEs implies uniqueness for polynomial PDEs. Therefore, a full-rank Jacobian in at least one data point yields the uniqueness of the PDE. However, the Jacobian criterion for analytic functions cannot be used to prove non-uniqueness and, thus, we need a different algorithm for the case that the Jacobian has nowhere full rank.

Thus, if one does not want to assume that $u$ is algebraic, we propose the following approach:
\begin{itemize}
    \item[(1)] Check for uniqueness in the larger space of analytic PDEs using the Jacobian criterion from Theorem~\ref{thm:Jacobian-condition-for-uniqueness-of-analytic-functions} by applying JRC. If the Jacobian has full rank for at least one point $(t,x)\in\R^{m+1}$, Theorem~\ref{thm:Jacobian-condition-for-uniqueness-of-analytic-functions} yields that the PDE is not unique. Note that Theorem~\ref{thm:Jacobian-condition-for-uniqueness-of-analytic-functions} is a consequence of Proposition~\ref{pro:measure-condition-for-uniqueness-of-analytic-functions}. Thus, it can only be meaningfully used if there are fewer derivatives $u_{\alpha^1},...,u_{\alpha^k}$ than input variables $(t,x_1,...,x_m)$, i.e., if $k\leq m+1$, as mentioned in Section~\ref{sec:uniqueness-analytic-PDEs}. Therefore, JRC is only useful in this case.

    \item[(2)] If Theorem~\ref{thm:Jacobian-condition-for-uniqueness-of-analytic-functions} cannot be applied, check uniqueness for polynomials with degree at most $p\in\N$ by constructing a feature matrix from the monomials of the derivatives and determining the rank of this matrix. If this matrix has full rank, the PDE is unique among polynomials with degree $p$. Otherwise, we know that it is not unique. This can be determined robustly by S-FranCo again, by defining the method $features$ such that it returns all monomials of the input vectors with degree at most $p$. We can then assess the rank of the matrices by checking for exponential decay of the lowest singular value.
\end{itemize}

\subsection{Analytic PDEs} \label{sec:numerical-tests-analytic-functions}

Following Theorem~\ref{thm:Jacobian-condition-for-uniqueness-of-analytic-functions}, JRC can also be applied to show that an analytic PDE is unique. Unfortunately, Theorem~\ref{thm:Jacobian-condition-for-uniqueness-of-analytic-functions} yields only a sufficient condition for uniqueness and, thus, we gain no information if the rank of the Jacobian matrix is below $k$. 

Similarly to the polynomial case, we then have to directly rely on Proposition~\ref{pro:general-uniqueness} and search for an analytic function $H$ such that $H(g)=0$. Thus, we define the $features$ method to return a matrix consisting of features expected in the PDE, e.g., $u_{\alpha^i}$, $u_{\alpha^i}^2$, $\exp(u_{\alpha^i})$ to build the feature matrix $U$. Using this as an input to S-FRanCo shows whether there exists a unique PDE that can be described as a linear combination of the chosen features.

\subsection{Continuous and Smooth PDEs}

Theorem~\ref{thm:uniqueness-c-infty-functions} shows that for uniqueness of the PDE in $C^p$ for any $0\leq p \leq\infty$, we need to check if the image of $g:U\rightarrow\R^k$, with $U\subset\R^{m+1}$ open, is dense in $\R^k$. Given finitely many evaluations of $g$ this is not rigorously possible. However, this is not a severe problem as most PDEs can be described by algebraic and analytic functions.

\section{Numerical Experiments} \label{sec:experiments}

In this section, we consider PDEs and their solution and determine whether there exists another PDE that is solved by the given function by using the theory developed in the last sections. Afterward, we apply the algorithms introduced in Section~\ref{sec:numerical-tests} to determine uniqueness numerically without assuming any prior knowledge. The source code of our experiments is publicly available in the anonymous git repository https://anonymous.4open.science/r/physical-law-learning-uniqueness-98CE.

\subsection{Linear PDEs}

We start again with linear PDEs. In Section~\ref{sec:experiments-linear-non-uniqueness} we consider a function that solves more than one linear PDE. We observe that the naive algorithm FRanCo cannot show the ambiguity of the PDE, while the improved version S-FRanCo does succeed. For this reason, we then solely apply S-FRanCo to the function in Section~\ref{sec:experiments-linear-uniqueness} to prove the uniqueness of a linear PDE.

\subsubsection{Non-Uniqueness} \label{sec:experiments-linear-non-uniqueness}

As a first experiment, we consider the function $u:\R^2\rightarrow\R,u(t,x)=\exp(x-at)$, with $a=3$. This function solves the linear PDE $u_t=-au_x$. The goal is now to determine if this is the unique linear PDE in the form $u_t=F(u,u_x)$ which is solved by $u$. For this, we investigate if the function $g\coloneqq(u,u_x)$ is non-trivially annihilated in the set of linear functions. By Corollary~\ref{cor:uniqueness-for-linear-pdes}, we know that this is equivalent to $u$ and $u_x$ being linearly independent. Since $u=u_x$ holds, the function $g=(u,u_x)$ is non-trivially annihilated in the set of linear functions and we easily see that $u$ also solves $u_t=-au$. 

We now show the non-uniqueness numerically. For this, we sample $u(t,x)$ on the square $[0,10]^2$ with 200 measurements for $t$ and 300 for $x$ without using noise. The derivatives are computed numerically from this data using second-order finite differences.

FRanCo should reveal that multiple linear PDEs $u_t=F(u,u_x)$ exist, which are solved by $u$. However, computing the derivative using the second-order finite differences introduces numerical errors. This causes linear independence among $u$ and $u_x$, as the lowest singular value is 10, see also Section~\ref{sec:numerical-tests-linear-pdes}.

Following the ideas from the last section, we apply S-FRanCo with linear features and compute the derivatives for different orders for finite differences. As described in the last section, \mbox{S-FRanCo} computes the lowest singular values for all orders, and their exponential decay to 0 indicates non-uniqueness. This is expected to happen if the exact matrix does not have full rank, since higher order approximations have higher order residual terms \citep{Press20007}. Indeed Figure~\ref{fig:sv-plot-linear-pde-ambiguous} shows precisely this behaviour. Thus, we can be certain that the matrix $(u(t_i,x_j),u_x(t_i,x_j))_{i,j}\in\R^{60,000\times2}$ is singular and, therefore, $u$ and $u_x$ are linearly dependent. This allows to conclude that there do exist multiple linear PDEs $u_t=F(u,u_x)$ solved by $u$.

\begin{figure}[h]
    \centering
    \includegraphics[width=0.5\textwidth]{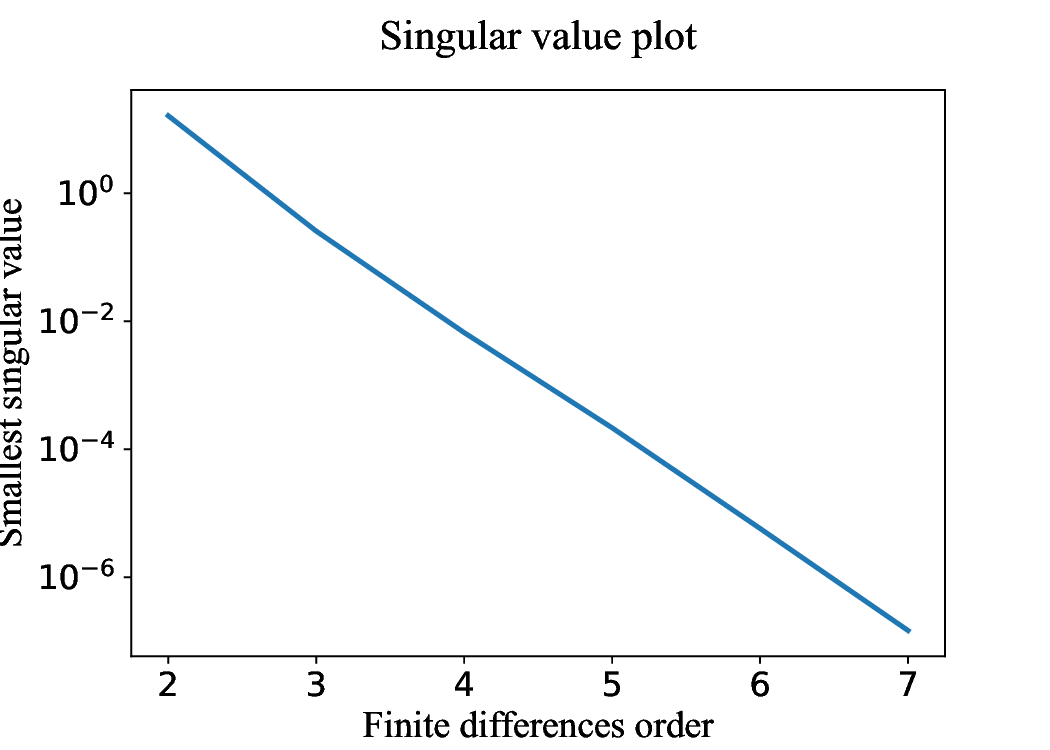}
    \caption{Plot of the lowest singular value of $(u(t_i,x_j),u_x(t_i,x_j))_{i,j}\in\R^{60,000\times2}$, where $u_x$ was computed using finite differences of different orders for $u(t,x)=\exp(x-at)$.}
    \label{fig:sv-plot-linear-pde-ambiguous}
\end{figure}

\subsubsection{Uniqueness} \label{sec:experiments-linear-uniqueness}

 We consider the linear PDE $u_t=au+bu_x$, with $a=1$ and $b=2$, $u(0,x)=x$, $u\in C^\infty(\R^2)$, which is solved by
\begin{equation}
    u(t,x)=(x+bt)\exp(at),\; t,x\in\R.
\end{equation} 
The question we address in this subsection is, whether there exists another linear PDE $u_t=F(u,u_x)$ which is solved by $u$. As the functions $u:\R^2\rightarrow\R, u(t,x)=(x+bt)\exp(at)$ and $u_x:\R^2\rightarrow\R, u_x(t,x)=\exp(at)$ are linearly independent, Corollary~\ref{cor:uniqueness-for-linear-pdes} yields that $u_t=au+bu_x$ is the unique linear PDE of the form $u_t=F(u,u_x)$ solved by $u$. 

As before, we now sample $u(t,x)$ on the square $[0,10]^2$ with 200 measurements for $t$ and 300 for $x$ without using noise. We then numerically assess uniqueness, i.e., the linear dependence of $u$ and $u_x$ by using the plot of the singular values in Figure~\ref{fig:non-uniqueness-for-linear-pdes} computed by S-FRanCo with linear features. Since Figure~\ref{fig:non-uniqueness-for-linear-pdes} indicates no exponential convergence to 0 of the lowest singular value for increasing accuracy, we showed numerically that $u$ and $u_x$ are linearly independent.

\begin{figure}[h]
    \centering
    \includegraphics[width=0.5\textwidth]{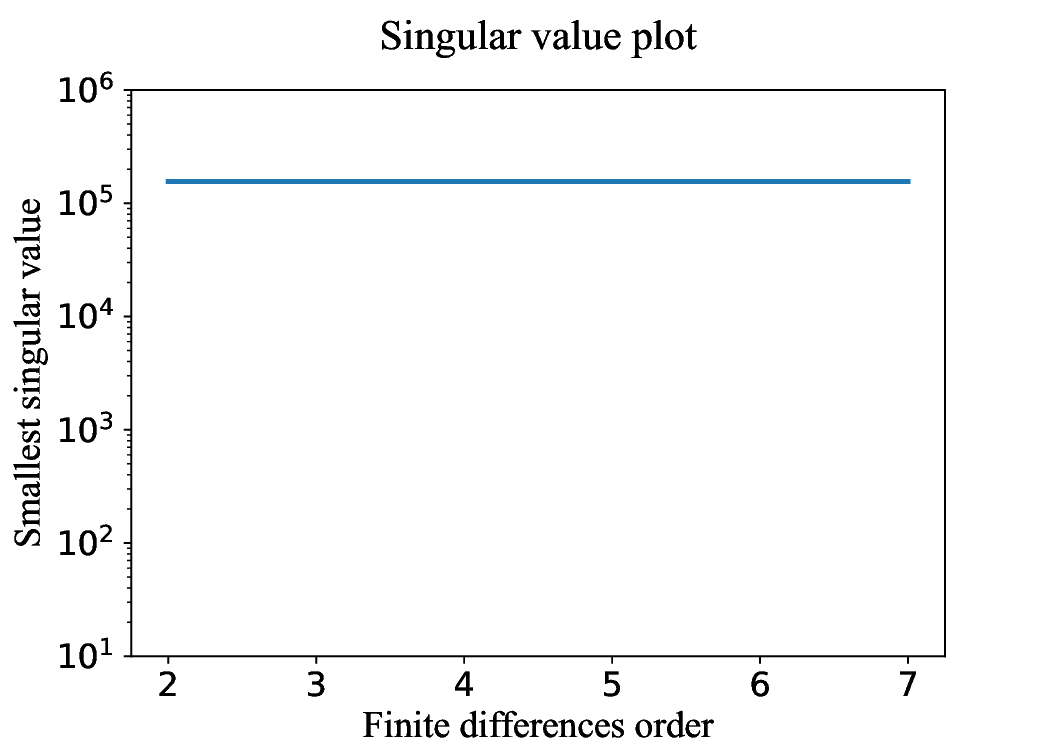}
    \caption{Plot of the lowest singular value of $(u(t_i,x_j),u_x(t_i,x_j))_{i,j}\in\R^{60,000\times2}$, where $u_x$ was computed using finite differences of different orders for $u(t,x)=(x+bt)\exp(at)$. (adapted from \citet{scholl2023icassp})}
    \label{fig:non-uniqueness-for-linear-pdes}
\end{figure}

Note that it was important to specify the derivatives used, i.e., that we addressed uniqueness for PDEs of the form $u_t=F(u,u_x)$ for $F:\R^2\rightarrow\R$ linear. To illustrate this we check now analytically if $u_t=au+bu_x$ is the unique PDE of the form $u_t=G(u,u_x,u_{xx})$ solved by $u$, for $G:\R^3\rightarrow\R$ linear. However, since $u_{xx}=0$ holds, the functions $u$, $u_x$ and $u_{xx}$ are linear dependent. This implies that $u_t=au+bu_x$ is not unique anymore when we allow $u_{xx}$ to be used. An additional PDE solved by $u$ is
\begin{equation}
    u_t=au+bu_x+ u_{xx}.
\end{equation} 
Thus, it becomes apparent that infinitely many linear PDEs exist solved by $u$, but only one of the forms $F(u,u_x)=u_t$ exists.

\subsection{Polynomial and Algebraic PDEs}

For the experiments with polynomial PDEs, we start by revisiting the Korteweg–De Vries equation in Section~\ref{sec:korteweg-de-vries-equation} and consider afterwards a PDE with an algebraic solution in Section~\ref{sec:experiments-polynomials-algebraic-function}.

\subsubsection{Korteweg–De Vries Equation} \label{sec:korteweg-de-vries-equation}

In this section, we consider Example~\ref{ex:kdv-equation} again and aim to verify that our algorithms are able to show the encountered ambiguity displayed there. For our experiments we set $a=0$ and $c=1$ and sample $u(t,x)$ on the square $[0,10]^2$ with 200 measurements for $t$ and 300 for $x$. First we check uniqueness for $u_t=F(u,u_x,u_{xx},u_{xxx})$ for linear functions $F$. For this, we apply S-FRanCo with linear features and visualize the results in Figure~\ref{fig:kdv-linear-pdes}. The clear absence of exponential decay shows that the one-way wave equation is indeed the unique linear PDE. 

\begin{figure}[h]
    \centering
    \includegraphics[width=0.5\textwidth]{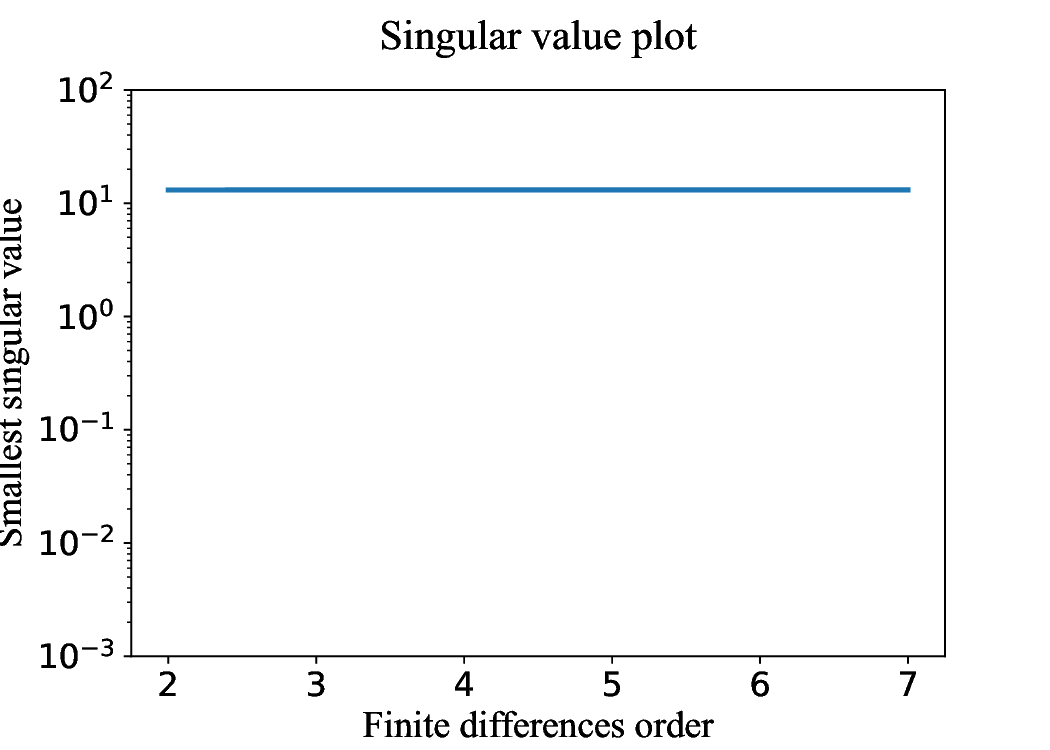}
    \caption{Plot of the lowest singular value of $(u(t_i,x_j),u_x(t_i,x_j),u_{xx}(t_i,x_j),u_{xxx}(t_i,x_j))_{i,j}\in\R^{60,000\times4}$, where the derivatives were computed using finite differences of different orders for $u(t,x)=\frac{c}{2}sech^2(\frac{\sqrt{c}}{2}(x-ct-a))$.}
    \label{fig:kdv-linear-pdes}
\end{figure}

Next, we aim to show numerically that the one-way wave equation is not the unique polynomial PDE. As $u$ is not algebraic, we cannot use Theorem~\ref{thm:algebraic-dependenceof-m+1-polynomials} and~\ref{thm:Jacobian-criterion-for-algebraic-independence}. Following the plan from Section~\ref{sec:numerical-experiment-polynomials}, we would usually start by applying JRC, i.e., by trying to apply the Jacobian criterion for analytic functions. However, there are more functions than variables involved and, therefore, the Jacobian criterion is not helpful as the rank of the Jacobian is at most $m+1=2<4=k$.

Hence, we directly jump to the second step, see Section~\ref{sec:numerical-experiment-polynomials}, namely using S-FRanCo with monomial features. This means we construct a library of monomials of the derivatives and check them for linear independence. Naturally, we start with monomials up to order 2, i.e., we construct the feature matrix

\begin{equation}
U = \left(\begin{array}{cccccccc}
| & | & | & | & | & | & \hdots & |\\
u & u_x & u_{xx} & u_{xxx} & u^2 & uu_x & \hdots & u_{xxx}^2\\
| & | & | & | & | & | & \hdots & |
\end{array}\right).    
\end{equation}

The rank computation of $U$ can be done using S-FRanCo and is shown in Figure~\ref{fig:kdv-polynomial-pdes}. We see that $U$ is indeed not full rank, which means that there exists a polynomial $p$ of degree 2 such that $p(u,u_x,u_{xx},u_{xxx})=0$ and the one-way wave equation is, thus, not the unique polynomial PDE solved by $u$.

\begin{figure}[h]
    \centering
    \includegraphics[width=0.5\textwidth]{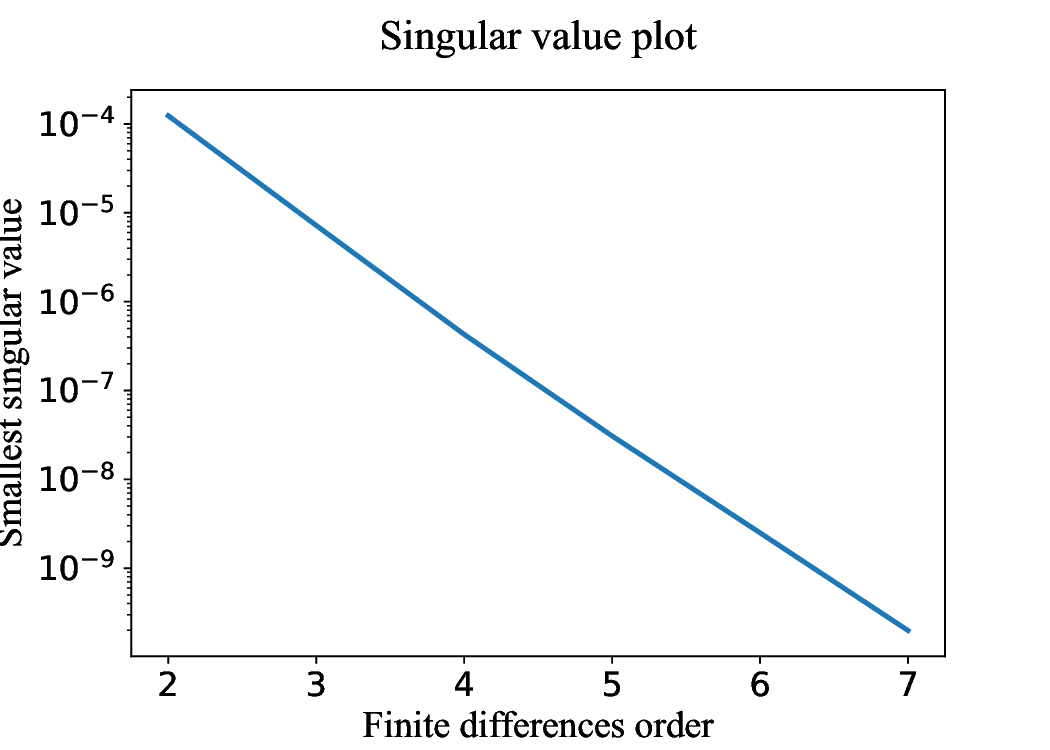}
    \caption{Plot of the lowest singular value of the feature matrix consisting out of all monomials up to degree 2 of $u$, $u_x$, $u_{xx}$ and $u_{xxx}$, where the derivatives were computed using finite differences of different orders for $u(t,x)=\frac{c}{2}sech^2(\frac{\sqrt{c}}{2}(x-ct-a))$. (adapted from \citet{scholl2023icassp})}
    \label{fig:kdv-polynomial-pdes}
\end{figure}

\subsubsection{Algebraic Solution} \label{sec:experiments-polynomials-algebraic-function}

In this section we consider the algebraic function $u:\R_{>0}^2\rightarrow\R,u(t,x)=1/(t+x)$ which solves the linear PDE $u_t=u_x$ and the polynomial PDE $u_t=-u^2$. For our experiments, we sample $u(t,x)$ on the square $[1,5]^2$ with 200 measurements for $t$ and 300 for $x$. We assume that we know that $u$ is an algebraic function. We then aim to show the non-uniqueness of polynomial PDEs of the form $u_t=F(u,u_x)$ by applying the Jacobian criterion.  Figure~\ref{fig:Jacobian-polynomial-ambiguous-algebraic-function} shows the smallest singular value of the Jacobian at different data points $(t_i,x_j)$, as computed by JRC. The upper image was created by computing the derivatives using $2^{nd}$ order finite differences, and the lower image using $7^{th}$ order finite differences. We observe a clear trend from singular values around $10^{-5}$ to $10^{-12}$ and, thus, deduce that the Jacobian is at no point $(t_i,x_j)$ full rank. As $u$ is algebraic, Theorem~\ref{thm:Jacobian-criterion-for-algebraic-independence} yields that $u$ solves multiple polynomial PDEs.

\begin{figure}[h]
    \centering
    \includegraphics[width=0.5\textwidth]{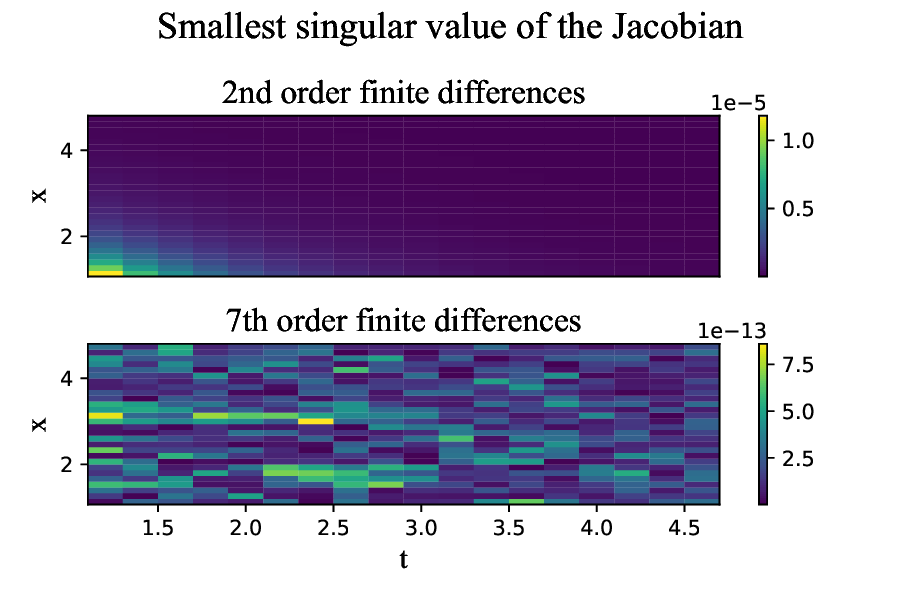}
    \caption{Smallest singular value of the Jacobian at different points $(t_i,x_j)$ of $g=(u,u_x)$ for $u(t,x)=1/(x+t)$. For the upper image, the derivatives were computed using 2nd-order finite differences and, for the lower image, 7th-order finite differences were used. (adapted from \citet{scholl2023icassp})}
    \label{fig:Jacobian-polynomial-ambiguous-algebraic-function}
\end{figure}

\subsection{Analytic PDEs}

In this last section, we investigate the usefulness of the Jacobian criterion for analytic PDEs. For this, we first consider a case where the function solves multiple analytic PDEs and reveals that its Jacobian is indeed never full rank. The second case then considers functions that solve only one analytic PDE. We then prove uniqueness numerically by showing that the Jacobian has full rank at all points.

\subsubsection{Non-Uniqueness}

Consider the PDE 
\begin{equation}
    u_t=u_x, \; u(0,x)=sin(x)
\end{equation}
which is solved by $u:\R^2\rightarrow\R,u(t,x)=sin(x+t)$. We sample $u(t,x)$ on the square $[0,5]^2$ with 200 equispaced measurements for $t$ and 300 for $x$. The goal is to check if $u_t=u_x$ is the only analytic PDE of the form $u_t=F(u,u_x)$, which is solved by $u$. Thus, we investigate the function
\begin{equation}
    g(t,x)\coloneqq(u(t,x),u_x(t,x))=(sin(t+x),cos(t+x)).
\end{equation} 
Obviously, $u$ and $u_x$ are linearly independent and, therefore, $u_t=u_x$ is the unique linear PDE only using $u$ and $u_x$ which is solved by $u$.

Since $\mathcal{D}=\{(x,y)\in\R^2:x^2+y^2=1\}$ we obtain $\lambda^2(\mathcal{D})=0$. Thus, the Jacobian has nowhere full rank, so Proposition~\ref{thm:Jacobian-condition-for-uniqueness-of-analytic-functions} and \ref{pro:measure-condition-for-uniqueness-of-analytic-functions} are not applicable. We now intend to verify this also numerically. Figure~\ref{fig:Jacobian-analytic-ambiguous} shows the smallest singular value of the Jacobian at different data points $(t_i,x_j)$ as computed by JRC. We clearly see the trend to a small singular value in every data point, indicating that the Jacobian matrix has never full rank. Therefore, we cannot deduce that the PDE was the unique analytic PDE. 

\begin{figure}[h]
    \centering
    \includegraphics[width=0.5\textwidth]{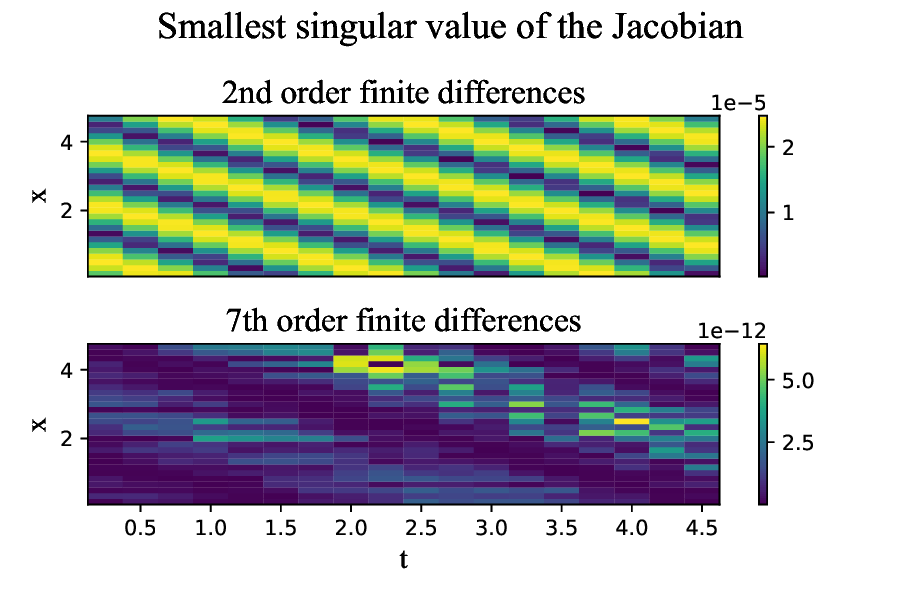}
    \caption{Smallest singular value of the Jacobian at different points $(t_i,x_j)$ of $g=(u,u_x)$ for $u(t,x)=sin(x+t)$. For the upper image, the derivatives were computed using $2^{nd}$-order finite differences and, for the lower image, 7th-order finite differences were used.}
    \label{fig:Jacobian-analytic-ambiguous}
\end{figure}

Thus, we continue with S-FRanCo and check the singular value plot of a feature library using monomials. This is displayed in Figure~\ref{fig:sv-plot-analytic-pdes-ambiguous} for monomials up to degree 2, which shows that $u_t=u_x$ is not the unique polynomial PDE solved by $u$. Indeed one can verify that the function $u$ also solves $u_t=u_x+u^2+u_x^2-1$.

\begin{figure}[h]
    \centering
    \includegraphics[width=0.45\textwidth]{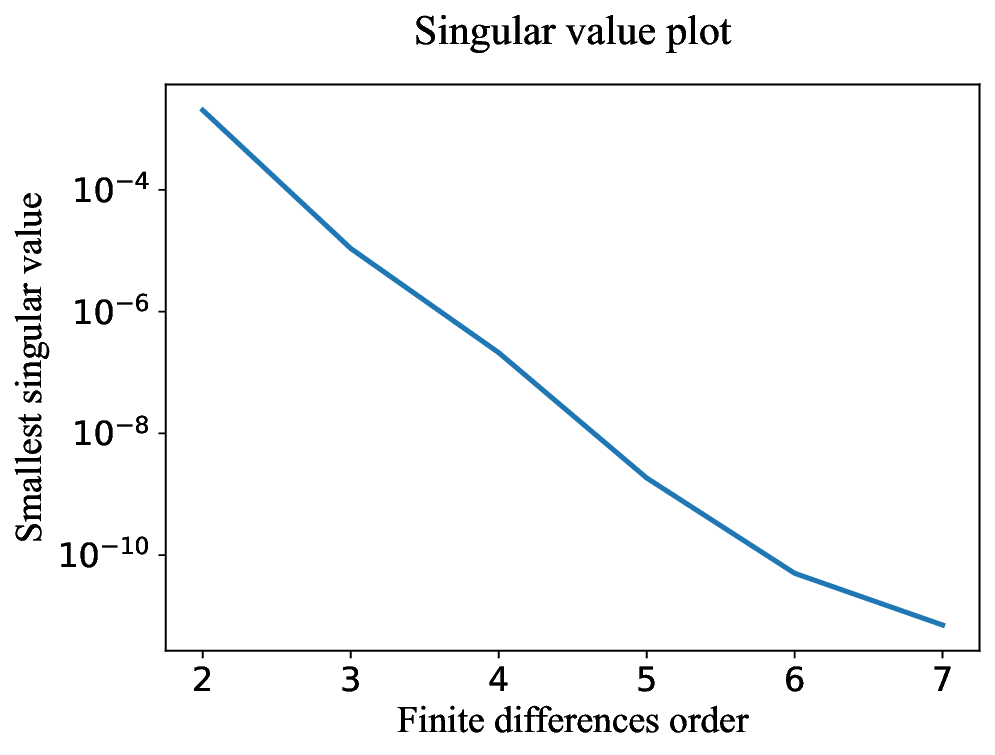}
    \caption{Plot of the lowest singular value of the feature matrix consisting out of all monomials up to degree 2 of $u$ and $u_x$, where the derivatives were computed using finite differences of different orders for $u(t,x)=sin(x+t)$.}
    \label{fig:sv-plot-analytic-pdes-ambiguous}
\end{figure}

\subsubsection{Uniqueness}

We now continue the investigation of the function $u:\R^2\rightarrow\R, u(t,x)=(x+bt)\exp(at)$, with $a=2$ and $b=3$, which we know from Section~\ref{sec:experiments-linear-uniqueness} solves only the linear PDE $u_t=au+bu_x$. We again sample $u(t,x)$ on the square $[0,10]^2$ with 200 equispaced measurements for $t$ and 300 for $x$. This time we focus on the question, of whether it is also the unique analytic PDE solved by $u$. Therefore, we apply JRC to check if there exists at least one data point for which the Jacobian matrix has full rank. Figure~\ref{fig:Jacobian-analytic-unique} shows that the Jacobian has full rank at every point and, thus, $u_t=au+bu_x$ is the unique analytic PDE solved by $u$. This can also be seen theoretically since the image of $(u,u_x)$ is $\R\times\R_{>0}$ and has therefore a non-zero measure.

\begin{figure}[h]
    \centering
    \includegraphics[width=0.5\textwidth]{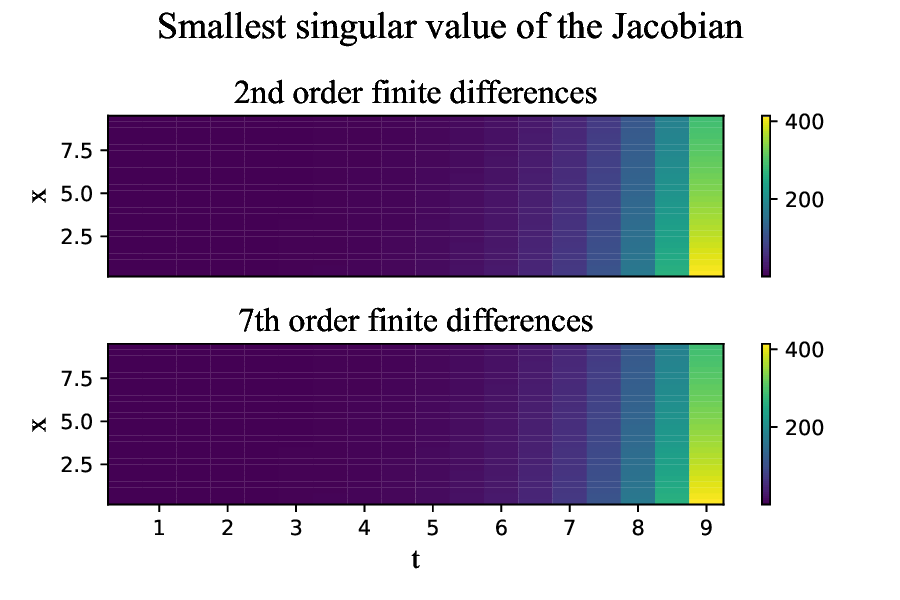}
    \caption{Smallest singular value of the Jacobian at different points $(t_i,x_j)$ of $g=(u,u_x)$ for  $u(t,x)=(x+bt)\exp(at)$, with $a=1$ and $b=2$. For the upper image, the derivatives were computed using 2nd-order finite differences and, for the lower image, 7th-order finite differences were used.}
    \label{fig:Jacobian-analytic-unique}
\end{figure}

At last, we consider a PDE which is non-algebraic but analytic:
\begin{equation} \label{eq:non-algebraic-pde}
    u_t=u_x-\frac{u}{u_x}\sin(u_x).
\end{equation}
We first show, that \eqref{eq:non-algebraic-pde} is solved by $u:\R_{>0}\times\R\rightarrow\R, u(t,x)=(x+t)v(t)$, for $t>0$, where $v:\R_{>0}\rightarrow\R, v(t)=\arcsin(\sech(t))$. We start with computing
\begin{equation}
    v_t(t)=-\frac{\tanh(t)\sech(t)}{\sqrt{1-\sech^2(t)}}=-\sech(t)=-sin(v(t)),
\end{equation}
for $t>0$ and, as $u_x=v$, we obtain for all $t>0$ and $x\in\R$
\begin{equation}
    u_t=v(t)-(x+t)v_t(t)=u_x-\frac{u}{u_x}sin(u_x).
\end{equation}

This is well-defined, since $u_x(t,x)\neq0$ holds for $t>0$. This follows from the fact that $\sech(t)=\frac{2e^t}{e^{2t}+1}\in(0,1)$, for $t>0$, and $\arcsin((0,1))=(0,\pi/2)$. We sample $u(t,x)$ on the square $[1,5]^2$ with 200 equispaced measurements for $t$ and 300 for $x$. We now ask whether \eqref{eq:non-algebraic-pde} is the unique analytic PDE of the form $u_t=F(u,u_x)$ which is solved by $u$. For this, we apply the Jacobian criterion using JRC and check the singular values in Figure~\ref{fig:Jacobian-pure-analytic-unique}. Again, we see no trend towards 0 for the least singular values and, therefore, we deduce that \eqref{eq:non-algebraic-pde} is the unique analytic PDE solved by $u$.

\begin{figure}[h]
    \centering
    \includegraphics[width=0.5\textwidth]{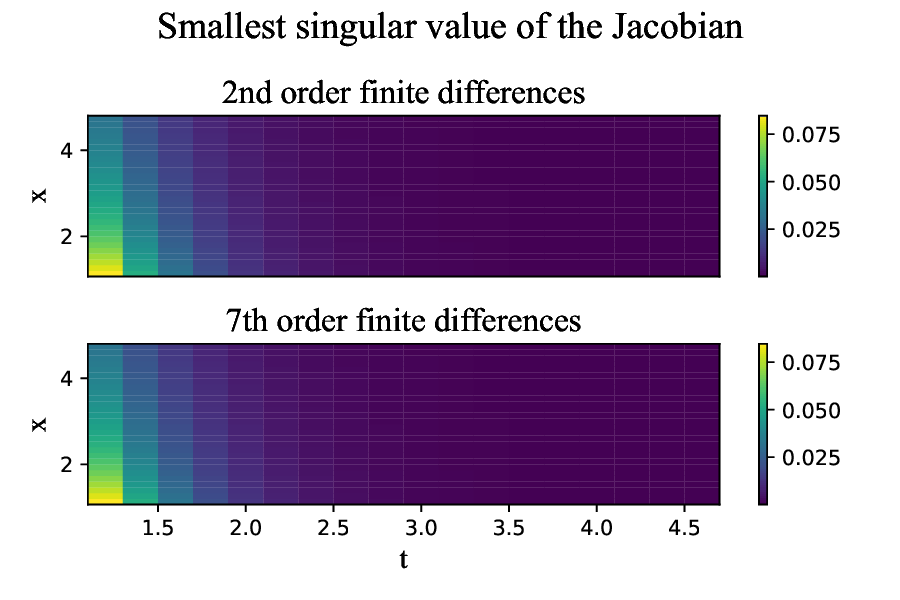}
    \caption{Smallest singular value of the Jacobian at different points $(t_i,x_j)$ of $u(t,x)=(x+t)\arccos(\sech(-t))$. For the upper image, the derivatives were computed using 2nd-order finite differences and, for the lower image, 7th-order finite differences were used. (adapted from \citet{scholl2023icassp})    }
    \label{fig:Jacobian-pure-analytic-unique}
\end{figure}

This follows also theoretically by Proposition~\ref{pro:measure-condition-for-uniqueness-of-analytic-functions}, as the image of $(u,u_x)$ is $\R\times v(\R)$ which has non-zero measure.

\section{Conclusion}

Although the non-uniqueness of governing equations is a well-addressed problem in the classical parameter estimation setting and it is even more important for modern approaches for the symbolic recovery of differential equations which search an even bigger space of equations, this problem has\textemdash{}to the best of our knowledge\textemdash{}never been addressed in this context in the literature apart from \citet{Rudy2017DatadrivenDO}, see Example~\ref{ex:kdv-equation}. Furthermore, \citet{Rudy2017DatadrivenDO} were only able to resolve the non-uniqueness problem for one specific equation because they knew in advance which equation they wanted to learn. This motivates the theoretical investigation of the question: When does a function $u$ solve a unique PDE of the form $F(u_{\alpha^1},...,u_{\alpha^k})=\frac{\partial^n u}{\partial^n t}$, where $F:\R^k\rightarrow\R$ belongs to a specific function class?

We address this question by proving necessary and sufficient conditions for various classes of differential equations, which foster our understanding of the uniqueness problem and can also be assessed numerically to bring immediate value to practitioners. These results also reveal how seldom one can expect to achieve uniqueness, showing the importance of a thorough analysis. 

In future work, we aim to derive results for specific cases common in the symbolic recovery literature to derive stronger uniqueness statements, e.g., we aim to make use of complexity restrictions common in symbolic regression. Additionally, we are interested in restricting the classes to physically more meaningful equations by exploiting symmetries. Furthermore, a current limitation of the proposed algorithms is their vulnerability to noise, which will be an important topic for future work.

\section*{Acknowledgments}

This work of P. Scholl, G. Kutyniok, and H. Boche was supported in part by the ONE Munich Strategy Forum (LMU Munich, TU Munich, and the Bavarian Ministery for Science and Art).

G. Kutyniok acknowledges support from the Konrad Zuse School of Excellence in Reliable AI (DAAD), the Munich Center for Machine Learning (BMBF) as well as the German Research Foundation under Grants DFG-SPP-2298, KU 1446/31-1 and KU 1446/32-1 and under Grant DFG-SFB/TR 109, Project C09 and the German Federal Ministry of Education and Research under Grant MaGriDo.

This work of H. Boche was supported in part by the German Federal Ministry of Education
and Research (BMBF) under Grant 16ME0442.

\bibliographystyle{plainnat}  
\bibliography{references}  

\appendix

\section{Fundamentals of Algebra} \label{app:fundamentals-of-algebra}

In this section, we introduce all definitions and results from algebra that are needed to understand the proofs in Section~\ref{sec:uniqueness-algebraic-pdes}. A more thorough treatment can be found in \citet{Morandi1996} and \citet{lang2012algebra}. 

We start with some basic notation. We denote the \emph{ring of polynomials} over $x=(x_1,...,x_m)$ with coefficients in the field $K$ by $K[x]$. A polynomial $p\in K[x]$ is called \emph{irreducible} over $K$ if there exist no non-constant polynomials $p_1,p_2\in K[x]$ such that $p(x)=p_1(x)p_2(x)$. The \emph{rational function field} $K(x)\coloneqq\{p(x)/q(x):p,q\in K[x],\; q\neq0\}$ is the field of rational functions with variables $x$ and coefficients in $K$.

The next definitions we need concern field extensions.

\begin{definition}
    Let $F$ and $K$ be fields with $F\subset K$. Then, we call $K$ a \emph{field extension} of $F$, denoted by $K/F$. We call an element $\alpha\in K$ \emph{algebraic} over $F$ if there exists a nonzero polynomial $P\in F[x]$ with $P(\alpha)=0$. If every element of $K$ is algebraic over $F$, we say that $K$ is algebraic over $F$ and $K/F$ is called an \emph{algebraic extension}.
\end{definition}

We will use this these definitions to define algebraic functions as algebraic elements over $\C(x)$. 

\begin{definition} \label{def:algebraic-closure}
    A field $F$ is called \emph{algebraically closed} if every polynomial $p\in F[x]$ with degree at least one has a root in $F$. An algebraic field extension $F\subset K$ is called the \emph{algebraic closure} of $F$ if it is algebraically closed. We denote $\overline{F}=K$. Now we can define an \emph{algebraic function} $f$ over $\C$ as an element of the algebraic closure of the rational function field $\C(x)$, i.e., $f\in\overline{\C(x)}$.
\end{definition}

A short note on polynomials and algebraic functions: Formally, polynomials and polynomial functions are two different quantities. A polynomial $p\in K[x]$, for $K$ some field, is a purely symbolic element from a polynomial ring which can be considered a function via the \emph{evaluation homorphism}, which maps a polynomial to the corresponding polynomial function $p_K:K\rightarrow K$. This differentiation is especially important if the evaluation homomorphism is not injective. This happens, for example, for $K=\mathbb{F}_2=\{0,1\}$ as $p(x)=x^2+x$ is unequal to the zero polynomial, even though $p_K(\alpha)=0$ for all $\alpha\in\mathbb{F}_2$. For $K=\R$ or $K=\C$, however, the evaluation homomorphism is injective and, therefore, we do not have to differentiate between polynomials and polynomial functions. This way we can connect the two definitions of algebraic functions, Definition~\ref{def:algebraic-function} and Definition~\ref{def:algebraic-closure}.

Many of the theorems on algebraic dependence of algebraic functions are about algebraic dependence over $\C$. However, we are interested in real-valued functions and algebraic dependence over $\R$. We can still use those results due to the next lemma.

\begin{lemma} \label{lem:real-function-algebraic-dependent}
    Let $f_i:\R\rightarrow\R$, $1\leq n$ be real-valued functions. Then the functions $f_i$ are algebraically independent over $\R$ if and only if $f_i$ are algebraically independent over $\C$.
\end{lemma}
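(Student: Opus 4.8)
The plan is to prove the two directions separately, with the nontrivial content being "$\R$-independence $\Rightarrow$ $\C$-independence" (the other direction is immediate since $\R[x_1,\dots,x_n]\subseteq\C[x_1,\dots,x_n]$, so a nonzero $\R$-polynomial witnessing $\R$-dependence is also a nonzero $\C$-polynomial witnessing $\C$-dependence). For the hard direction I would argue by contraposition: suppose the $f_i$ are algebraically dependent over $\C$, so there exists a nonzero polynomial $P\in\C[x_1,\dots,x_n]$ with $P(f_1,\dots,f_n)=0$ pointwise on $\R$. The goal is to manufacture a nonzero \emph{real} polynomial annihilating the same tuple.

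The key step is to split $P$ into real and imaginary parts of its coefficients: write $P = P_1 + i\,P_2$ where $P_1,P_2\in\R[x_1,\dots,x_n]$ are obtained by taking the real and imaginary parts of each coefficient of $P$. Since $P\neq 0$, at least one of $P_1,P_2$ is a nonzero real polynomial. Now evaluate at the point $(f_1(t),\dots,f_n(t))\in\R^n$ for an arbitrary $t\in\R$: because all the $f_i(t)$ are real numbers, $P_1(f_1(t),\dots,f_n(t))$ and $P_2(f_1(t),\dots,f_n(t))$ are both real, and they are precisely the real and imaginary parts of $P(f_1(t),\dots,f_n(t)) = 0$. Hence $P_1(f_1,\dots,f_n) = 0$ and $P_2(f_1,\dots,f_n) = 0$ identically on $\R$. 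Picking whichever of $P_1,P_2$ is nonzero gives a nonzero polynomial in $\R[x_1,\dots,x_n]$ annihilating $(f_1,\dots,f_n)$, i.e.\ the $f_i$ are algebraically dependent over $\R$. This establishes the contrapositive.

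I do not expect a real obstacle here; the only point requiring a moment's care is that evaluation must be at a \emph{real} argument so that the decomposition into real and imaginary parts commutes with substitution — this is exactly why the hypothesis that the $f_i$ are \emph{real-valued} is used (if their values were complex, $P_1$ and $P_2$ evaluated at $(f_1(t),\dots)$ need not be the real/imaginary parts of $P$ evaluated there). One should also note that the statement and argument work verbatim for functions $f_i\colon\R^p\to\R$ of several variables, not just $p=1$; the proof uses only that each $f_i$ takes real values. Finally, although the lemma is stated for $n$ functions, the same splitting trick shows more generally that for any subfield-type consideration the real and complex notions of algebraic (in)dependence of real-valued functions coincide, which is all that is invoked in Section~\ref{sec:uniqueness-algebraic-pdes}.
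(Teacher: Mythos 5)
Your proposal is correct and follows essentially the same route as the paper: decompose $P=P_1+iP_2$ into real and imaginary coefficient parts, use that the $f_i$ are real-valued to identify $P_1(f_1,\dots,f_n)$ and $P_2(f_1,\dots,f_n)$ with the real and imaginary parts of $P(f_1,\dots,f_n)=0$, and pick the nonzero one. Your write-up is in fact slightly more careful than the paper's, which states the equivalence of the two dependence notions directly without spelling out the contrapositive or the role of the real-valuedness hypothesis.
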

\begin{proof}
    Let $p\in\C[x]$ and decompose it as $p=p_1+ip_2$ with $p_1,p_2\in\R[x]$. Then, we observe that $p(f_1,...,f_n)=0$ is equivalent to
    $p_1(f_1,...,f_n)=0=p_2(f_1,...,f_n)$. 
    This shows that $0\neq p\in\C[x]$ with $p(f_1,...,f_n)=0$ exists if and only if there exists $0\neq q\in\R[x]$ with $q(f_1,...,f_n)=0$.
\end{proof}

\section{Analytic Independence} \label{app:analytic-independence}

A logical consideration is to try if Theorem~\ref{thm:algebraic-dependenceof-m+1-polynomials} and~\ref{thm:Jacobian-criterion-for-algebraic-independence} extend to analytic functions and analytic independence. This would be a powerful extension as it might be a plausible assumption in many cases that $u$ is an analytic function. Also, the proof in \citet{EhrenborgRota1993} shows that it suffices to prove that analytic dependence defines a matroid similar to how algebraic dependence defines a matroid. Analytic analogs of Theorem~\ref{thm:algebraic-dependenceof-m+1-polynomials} and \ref{thm:Jacobian-criterion-for-algebraic-independence} would follow similarly as for algebraic functions and algebraic PDEs as shown in Section~\ref{sec:uniqueness-algebraic-pdes} and in \citet{EhrenborgRota1993}. Let us now define analytic dependence \citep{Abhyankar1976}, before we discuss why a proof as for algebraic function and polynomial/algebraic PDEs cannot exist for analytic functions and analytic PDEs.

\begin{definition} \label{def:analytic-dependence}
    We call functions $f_1,...,f_q:\C^p\rightarrow\C$ \emph{analytically dependent over $\C$} if there exists a non-zero analytic function $P\in C^\omega(\C^q)\backslash \{0\}$ such that $P(f_1(x_1,..,x_p),...,f_q(x_1,..,x_p))=0$. If no such $P$ exists, we call $f_1,...,f_q$ \emph{analytically independent over $\C$}.
\end{definition}

Following the proof in \citet{EhrenborgRota1993} the analytic extension of Theorem~\ref{thm:Jacobian-criterion-for-algebraic-independence} would rely on the extension of Theorem~\ref{thm:algebraic-dependenceof-m+1-polynomials}. However, this approach is not feasible as the following example shows the existence of two analytic functions in one variable which are analytically independent.

\begin{example} \label{ex:two-analytic-function-independent}
    Let $a,b\in\R$ be linearly independent over $\Q$. Then there does not exist a non-zero analytic function $F:U\rightarrow\C$, for $U\subset\C^2$ open and $B_1(0)\subset U$, which fulfills $F(e^{iat},e^{ibt})=0$ for all $t\in\R$. Thus, the functions $t\mapsto e^{iat}$ and $t\mapsto e^{ibt}$ are analytically independent.
\end{example}
\begin{proof}
    Let $F:U\rightarrow\C$ be any analytic function on some open set $U\subset\C^2$ with $B_1(0)\subset U$. Furthermore, let $F(x,y)=\sum_{k,l=1}^\infty c_{kl}x^ky^l$ be its Taylor series around $(x,y)=0$ which we know converges on the entire set $U\subset\R^2$. Let $f(t)= e^{iat}$ and $g(t)= e^{ibt}$ for $t\in\R$. Then, $F(f(t),g(t))=\sum_{k,l=1}^\infty c_{kl}e^{i(ak+bl)t}=0$ for any $t\in \R$. This implies
    \begin{equation}
		0=||\sum_{k,l=1}^\infty c_{kl}e^{i(ak+bl)t}||_2^2=
		\inner{\sum_{k,l=1}^\infty c_{kl}e^{i(ak+bl)t}}{\sum_{k,l=1}^\infty c_{kl}e^{i(ak+bl)t}}=\sum_{k,l=1}^\infty |c_{kl}|^2.
\end{equation}
    The last step follows from the fact that $a$ and $b$ are linearly independent over $\Q$. Thus, $ak+bl$ are different for each $k,l\in\Z$ and, therefore, $(e^{i(ak+bl)x})_{k,l}$ is an orthonormal set. 
    This yields that $c_{kl}=0$ for all $k,l\in\N$ and, thus, $F=0$. This finishes the proof.
\end{proof}

\end{document}